\documentclass[letterpaper]{article}
\usepackage[preprint]{aaai2027}
\usepackage[hyphens]{url}
\usepackage{graphicx}
\usepackage{natbib}
\usepackage{caption}
\usepackage{algorithm}
\usepackage{algorithmic}
\usepackage{amsmath}
\usepackage{amssymb}
\usepackage{amsthm}
\usepackage{booktabs}
\newtheorem{theorem}{Theorem}
\newtheorem{corollary}{Corollary}
\newtheorem{assumption}{Assumption}
\newcommand{\sg}{\mathrm{sg}}
\newcommand{\softmax}{\mathrm{softmax}}

\title{A Differentiable Atari VCS: A Complex, Fully Known\\
Ground Truth for Explainable AI}

\author{
    Andreas Maier\textsuperscript{\rm 1},\quad
    Siming Bayer\textsuperscript{\rm 1},\quad
    Patrick Krauss\textsuperscript{\rm 1,2}
}
\affiliations{
    \textsuperscript{\rm 1}Pattern Recognition Lab, Friedrich-Alexander-University Erlangen-Nuremberg, Germany\\
    \textsuperscript{\rm 2}Mannheim Center for Neuromodulation and Neuroprosthetics, Heidelberg University, Germany
}

\begin{document}
\maketitle

\begin{abstract}
Explanation requires ground truth: to verify that an account of a system is
correct, we must know the system's inner functioning. This is precisely what
is missing wherever explainable AI (XAI) is most needed. The systems we can
study today fall into two camps. Either they are simple and procedural---
decision trees, rule lists, sparse linear models---where the mechanism is
known but trivial, so explaining it tests nothing, or they are genuinely
complex---deep networks, real-world tasks---where XAI is needed but no ground
truth of the inner functioning exists, so an explanation can be plausible,
confident, and wrong with no way to tell. We set out to remove this
dichotomy by building the foundation for a study object that is at once
genuinely complex yet fully specified---inspectable by construction---and, so that gradient-based methods can
apply, fully differentiable. We re-implement the Atari 2600 Video Computer
System (VCS)---a real computer architecture, and the platform on which deep
reinforcement learning was first established---as two independent, end-to-end
\emph{differentiable} emulators, one in Julia (\textsc{jutari}) and one in
JAX (\textsc{jaxtari}), each validated bit-for-bit against the \textsc{xitari}
reference. Both ports reproduce \textsc{xitari} exactly on all 64 supported
Arcade Learning Environment (ALE) games: 64/64 byte-identical RAM and
64/64 pixel-identical screens. Treating the cartridge ROM as a weight tensor,
the RAM as a soft tape, and control flow as gates, we prove that the
differentiable (soft) execution is bit-exact equal to the original (hard)
execution in the forward pass at any finite temperature, while exposing
\emph{surrogate} gradients where the bit logic itself has none. The JAX port also
opens a GPU path: batched, fully differentiable rollouts reach millions of
environment-steps per second on a single commodity GPU. The system was built in roughly 137 hours
of active work over 29 calendar days, with large parts written autonomously
by coding agents. This paper builds and validates the foundation, and shows---
theoretically and in a qualitative gradient study---that gradient-based XAI on
it is feasible. The full code of both ports is available under the MIT license at \url{https://github.com/akmaier/UnderstandingVCS}.
\end{abstract}

\section{Introduction}

When we explain a system---in explainable AI (XAI) as anywhere else---we claim
that \emph{this} part is responsible for \emph{that} behaviour. Checking such a
claim requires knowing the inner functioning independently, as ground truth.
Without it we can ask whether an explanation is plausible or convincing, but not
whether it is \emph{true}. Ground truth is the prerequisite for verification, and
it is exactly what is missing where explanation matters most.

The systems available to study fall into two unsatisfying camps. Simple,
procedural models---decision trees, rule lists, sparse linear models---have fully
known inner functioning, but there the explanation is essentially the model
itself, testing an XAI method on nothing it could get wrong. These are not the
systems XAI was created for. The systems XAI does target---deep neural networks,
learned agents, real-world pipelines---are genuinely complex and, for that very
reason, have no ground truth of their inner functioning, so an attribution or
saliency map can be plausible, confident, and wrong with no way to tell
\citep{atrey2020exploratory,nikulin2019freelunch}.

Neuroscience faces a similar problem. Therefore, \citet{jonas2017could} asked
whether the standard systems-neuroscience toolkit---connectomics, tuning curves,
lesioning, dimensionality reduction---could recover how a MOS\,6502 microprocessor
computes, using the chip as a model organism precisely because its mechanism is
complex \emph{and} completely known. Even with perfect, unlimited data the methods
produced structure that looked meaningful but did not recover the processor's
actual organisation. The conclusion was that the \emph{methods} were lacking, and
that the way to find out is to test them on a system whose ground truth we already
possess. Two things make it difficult to translate this to modern XAI: the
microprocessor was not \emph{differentiable}, so today's gradient-based methods
could not be applied, and the analyses were classical statistics, not XAI. The
MOS\,6502 is no accident---it is the CPU at the heart of the Atari 2600 Video
Computer System (VCS), the platform on which modern deep reinforcement learning
was first demonstrated \citep{mnih2013playing,mnih2015human,bellemare2013arcade}.

We therefore build the missing piece: a study object at once \emph{complex} (a
real computer architecture), \emph{fully specified} to us bit-for-bit, and
\emph{differentiable}, so the gradient-based XAI toolkit can in principle be turned
on it. We re-implement the VCS---a 6507 CPU executing code from a cartridge ROM, a
Television Interface Adapter (TIA) that turns register writes into pixels cycle by
cycle, and a RIOT (RAM, I/O, timer) chip providing the RAM, an interval timer, and
joystick and switch I/O---as two independent, end-to-end differentiable emulators:
\textsc{jutari} in Julia \citep{bezanson2017julia} and \textsc{jaxtari} in JAX
\citep{bradbury2018jax}. Each runs in a bit-exact \emph{hard} mode and a
differentiable \emph{soft} mode, validated against \textsc{xitari} \citep{xitari},
the Stella-derived C\texttt{++} emulator \citep{stella} that served as the
reference for the original deep-Q-network (DQN) work.

This paper builds and validates that foundation, and shows---both theoretically
and in a qualitative gradient study---that XAI on this system is feasible. Our
contributions are:
\begin{itemize}
  \item \textbf{Two differentiable VCS ports.} Independent Julia
  (\textsc{jutari}) and JAX (\textsc{jaxtari}) implementations of the full VCS
  (CPU, TIA, RIOT, cartridge bank-switching, console, controllers), each with a
  bit-exact hard path and a differentiable soft path. Both ports are
  bit-for-bit identical to \textsc{xitari} on all 64 supported games---64/64
  byte-identical RAM and 64/64 pixel-identical screens.
  \item \textbf{A soft/hard formulation with theoretical analysis thereof.}
  We treat the ROM as a weight tensor, the RAM as a soft tape, and control
  flow as gates, and show that the soft forward pass is bit-exact equal to the
  hard forward pass at any finite temperature (Theorem~\ref{thm:exact}), and
  give the temperature-limit error bound for the fully relaxed variant
  (Theorem~\ref{thm:limit}).
  \item \textbf{An AI-assisted engineering account.} A quantified report of
  how an emulator port that would classically cost many person-months was
  built in roughly 137 hours of active work, measured from the version-control
  log, with large parts written autonomously by coding agents.
  \item \textbf{A conformance evaluation} across the 64-game Arcade Learning
  Environment (ALE) set, with
  formally defined bit- and pixel-exactness metrics, and the observation that
  a bit-exact re-implementation is itself an audit tool for its reference.
  \item \textbf{A qualitative gradient study} showing that the foundation works
  as intended: the forward pass is bit-exact and the backward pass supplies
  \emph{surrogate} gradients with respect to registers, RAM, and ROM bytes, so
  attributions can be checked against the known mechanism---demonstrated here on
  the joystick-to-screen control path.
\end{itemize}

\section{Background and Related Work}

The dominant XAI tools are \emph{attribution} or \emph{saliency} methods,
which assign each input element a number reflecting how much it mattered to
the output. Grad-CAM computes the gradient of a target output with respect
to a convolutional layer and renders it as a heatmap of important regions
\citep{selvaraju2017gradcam}. Grad-CAM++ refines this with a pixel-wise
weighting of positive gradients \citep{chattopadhyay2018gradcampp}. These
methods are powerful and architecture-agnostic, but they share a limitation
their own authors flag: there is no way to confirm that a heatmap reflects
the network's true reason for deciding, because for a deep network that
reason is unknown. Grad-CAM++ notes explicitly that conventional
faithfulness metrics ``need not correlate with the actual factors
responsible for the network's decision''
\citep{chattopadhyay2018gradcampp}. Validation therefore falls back on
proxies---bounding-box overlap, or human-trust studies.

The problem sharpens when the object of explanation is a reinforcement
learning (RL) agent rather than an image classifier. A widely studied such
agent is the DQN, which learns to play Atari games from raw pixels
\citep{mnih2013playing,mnih2015human} and became a standard testbed.
\citet{greydanus2018visualizing} introduced a perturbation-based
saliency for Atari agents and described strategies such as Breakout
``tunneling''. The one case in which they verify the saliency is an
experiment where they inserted the ground-truth feature themselves.
\citet{nikulin2019freelunch} built saliency directly
into the agent and validated it against human eye-tracking---a behavioural
surrogate, not the machine's own computation. \citet{atrey2020exploratory}
then made the consequence explicit: using counterfactual interventions (for
example, mirroring the Breakout wall) they showed that popular
saliency-based explanations of Atari agents are frequently unfalsifiable
and do not hold up, and concluded that saliency maps are best treated as an
\emph{exploratory}, not an \emph{explanatory}, tool. \citet{such2019atarizoo}
industrialised the comparison of trained agents while leaving the validation
gap untouched.

Surveys of explainable RL confirm that this is the field's structural
condition rather than an incidental gap
\citep{qing2022xrlsurvey,vouros2023xrlsurvey,cheng2025xrlsurvey,saulieres2025xrlsurvey}.
Across hundreds of methods the target is uniformly described as a closed or
black box, and objective, mechanism-grounded evaluation is repeatedly named
among the field's open needs. Even methods designed for interpretability
inherit the problem: XDQN makes a DQN interpretable by training a transparent
surrogate to mimic it, but its strongest correctness measure is fidelity
\emph{to the DQN}---agreement with another black box, not with a known
mechanism \citep{kontogiannis2023xdqn}. The recurring obstacle is the one
\citet{jonas2017could} identified: without a study object whose true mechanism
is available, an explanation cannot be checked. We build such an
object---complex, fully specified, and differentiable---and validate it, opening
the way to develop and validate XAI tools against a known mechanism.

\subsection{Known Operators and Differentiable Programming}

Our approach is related to \emph{known-operator learning}. The idea began with
deep-learning computed tomography \citep{wuerfl2016deep}, where a known
reconstruction operator is built into the network rather than learned from
data. \citet{maier2019known} later showed that embedding known, differentiable
operators directly into a network reduces its maximum error bound: if a layer
is known, its error term vanishes and the corresponding part of the bound
cancels, even for networks of arbitrary depth. The principle is general---``any operation that allows
computation of a gradient or sub-gradient towards its inputs'' can be
embedded \citep{maier2019known}---and the same spirit underlies
physics-informed neural networks, which embed known governing equations
\citep{raissi2019pinn}, and operator-aware treatments of deep
learning for practitioners \citep{maier2019gentle}. Our VCS is, in this
sense, an extreme case: \emph{every} operator is known, so the entire
forward computation is a fixed, differentiable program, and the only thing
left to ``explain'' is how that known program maps inputs to outputs.

Making a hand-written emulator differentiable requires a general-purpose
differentiable-programming substrate. Julia provides source-to-source
automatic differentiation (AD) over essentially arbitrary code via Zygote
\citep{innes2019zygote}, with eager execution and mutable state that map
naturally onto an emulator's registers and RAM. JAX provides trace-based AD
with just-in-time compilation to XLA \citep{bradbury2018jax}, which favours
pure, functional, array-shaped code. The two make different trade-offs for
this workload, and building both lets us cross-check each port against the
other in addition to the reference. We treat the cartridge ROM as a weight
tensor, the 128 bytes of RAM as a Neural-Turing-Machine-style addressable
tape \citep{graves2014ntm}, and discrete control flow as gates relaxed with
the Gumbel-softmax reparameterisation \citep{jang2017gumbel} and
straight-through estimators \citep{bengio2013estimating}.

\subsection{Why the Atari VCS Is Good Ground Truth}

\begin{figure}[t]
\centering
\includegraphics[width=0.92\columnwidth]{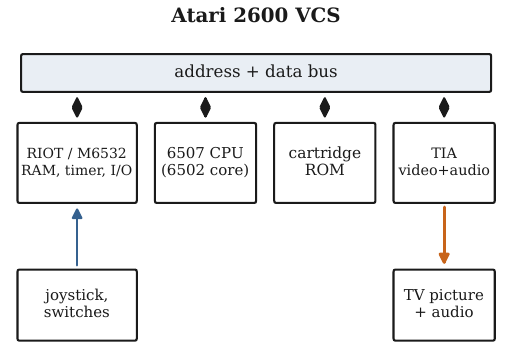}
\caption{The Atari 2600 VCS. A 6507 CPU executes code from a
(bank-switched) cartridge ROM over a shared address/data bus. The RIOT
provides 128 bytes of RAM, a timer, and joystick/switch I/O, and the TIA turns
register writes into a picture and sound, cycle by cycle. Every block is
re-implemented and differentiated in both ports.}
\label{fig:arch}
\end{figure}

The VCS (Figure~\ref{fig:arch}) is a real computer architecture, small
enough to know completely yet complex enough to be interesting. A 6507 CPU
(a cost-reduced MOS\,6502) executes a program stored in the cartridge ROM.
Because the address space is only 13 bits wide, larger cartridges
\emph{bank-switch}: writes to special addresses swap which 4\,KB ROM page is
visible. The RIOT chip (M6532) holds the 128 bytes of system RAM, an
interval timer, and the input ports for the joystick, paddles, and console
switches. The heart of the machine is the TIA: it has no frame buffer, so
the CPU must race the electron beam, writing its registers in time with each
scanline. These include the colour registers---\texttt{COLUP0}
and \texttt{COLUP1} for the two player sprites, \texttt{COLUPF} for the
playfield, and \texttt{COLUBK} for the background---together with sprite-shape
and position registers. The screen and audio are the TIA's outputs. This is
the system the Arcade Learning Environment \citep{bellemare2013arcade} exposes
to RL agents, and that \textsc{xitari} \citep{xitari}, a fork of the Stella
emulator \citep{stella}, uses as the reference for DQN. Because \textsc{xitari}
is deterministic given a ROM, an action stream, and a seed, every register,
RAM byte, and pixel our ports produce can be compared against it, and any
disagreement is a measurable mistake in the software port.

\paragraph{Other Atari substrates and differentiable simulators.}
The Atari platform is also used as a throughput target and as a structured RL
environment. CuLE ports ALE to CUDA for massive-batch rollouts (up to
${\sim}155$M frames/h on a single GPU) but is neither bit-exact nor
differentiable \citep{dalton2020cule}; object-centric and JAX-native Atari
environments such as OCAtari and JAXAtari expose structured states and run on
the GPU \citep{delfosse2023ocatari,delfosse2025jaxatari}, but JAXAtari
\emph{reimplements} each game's logic in JAX and cannot execute the original ROMs
at all, and both are RL \emph{environments} rather than a gate-level
differentiable port validated bit-for-bit against the emulator reference. Differentiable simulators are by now standard in
continuous physics---Brax back-propagates through rigid-body dynamics in JAX
\citep{freeman2021brax}---whereas our substrate differentiates a \emph{discrete}
computer architecture whose hard forward pass is provably exact. We follow the
ALE evaluation protocol and its game catalogue
\citep{bellemare2013arcade,machado2018revisiting}.

\section{Building Two Differentiable VCS Ports}

We ported \textsc{xitari} subsystem by subsystem---CPU, bus, TIA, RIOT,
cartridge mappers, console, controllers, and the ALE-style environment
wrapper---under a strict conformance gate. We define ``correct'' by three
automated test harnesses: programs that drive a port and the reference on
identical inputs and compare their internal state byte for byte, failing on
any mismatch. The first (PXC1) compares each port against a per-instruction
\textsc{xitari} trace of the CPU registers, RAM, and TIA state. The second
(PXC2) cross-checks the two ports against each other, so that a bug shared by
the reference and one port is still caught by the second, independent
implementation. The third (PXC-S) compares the full $210\times160$ frame
buffer. Each port implements all 151 documented NMOS\,6502 instructions
plus the undocumented \texttt{USBC} alias and 37 common ``illegal''
opcodes (NOP/LAX/SAX families)---189 opcode bytes in total---in both
execution modes, and nine cartridge mappers (2K, 4K, F8, F6, F4, their
Superchip variants, and E0) with content-based auto-detection mirroring
the reference. Table~\ref{tab:ports} summarises the two ports.

\begin{table}[t]
\centering
\setlength{\tabcolsep}{4pt}
\caption{The two differentiable ports at a glance. Both reproduce
\textsc{xitari} bit-for-bit on all 64 games. Throughput rows are soft-mode
env-steps/s (Pong, $3{,}000$ steps); a dash marks a regime the port does not
target. Full per-batch scaling is in the supplementary material.}
\label{tab:ports}
\begin{tabular}{lcc}
\toprule
 & \textsc{jutari} & \textsc{jaxtari} \\
\midrule
Language / AD            & Julia / Zygote & JAX / XLA \\
Source lines            & 8{,}630 & 10{,}111 \\
Test lines              & 6{,}052 & 11{,}855 \\
Test cases              & ${\sim}1{,}183$ & 803 \\
Opcode bytes            & 189 & 189 \\
Cartridge mappers       & 9 & 9 \\
RAM-exact (of 64)       & \textbf{64} & \textbf{64} \\
Pixel-exact (of 64)     & \textbf{64} & \textbf{64} \\
\addlinespace
Throughput, 1 env (CPU) & $\mathbf{370{,}100}$ & $1{,}178$ \\
Throughput, batched CPU & --- & ${\sim}60{,}000$ \\
Throughput, batched GPU & --- & \textbf{$\sim$3.1\,M} \\
\bottomrule
\end{tabular}
\end{table}

\subsection{Hard and Soft Execution}

\begin{figure*}[t]
\centering
\includegraphics[width=0.90\textwidth]{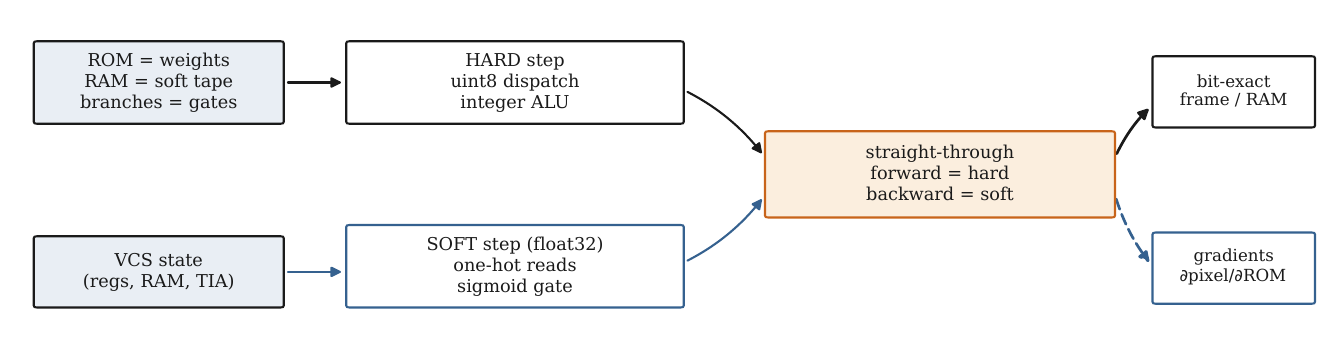}
\caption{The two execution paths share one known mechanism. The HARD step
uses integer dispatch and exact bit logic. The SOFT step reads the ROM and
RAM as one-hot dot products and relaxes the branch decision with a sigmoid
gate. A straight-through estimator joins them: the forward value is the
hard one (bit-exact), while the backward pass uses the soft gradient, so the
substrate emits an exact frame together with a \emph{surrogate} gradient of each
pixel with respect to ROM bytes and registers (forward exact; backward from the
relaxation).}
\label{fig:pipeline}
\end{figure*}

Each port runs in two modes that share one code path (Figure~\ref{fig:pipeline}).
In \textbf{hard} mode, the emulator is the ordinary bit-exact machine:
opcode bytes index an integer dispatch table, the ALU is integer
arithmetic, and flags are bits. In \textbf{soft} mode, the same step is
re-expressed so that gradients can flow. Three relaxations carry the idea.

First, every ROM and RAM read is written as a dot product against a one-hot
address vector. For a memory tensor $r\in\mathbb{R}^{M}$ of size $M$ (the ROM or
RAM) and an address $a\in\{0,\dots,M-1\}$,
\begin{equation}
\mathrm{peek}(r,a) \;=\; \mathbf{1}_a^{\top} r \;=\; \sum_{i} [\![i=a]\!]\, r_i \;=\; r_a .
\label{eq:peek}
\end{equation}
The forward value is exactly $r_a$, but the gradient
$\partial\,\mathrm{peek}/\partial r = \mathbf{1}_a$ is now defined and
one-hot---this is what makes ``which ROM byte explains this pixel?'' a
gradient question. The RAM tape uses the identical construction, which is
the discrete limit of the soft, attention-style addressing of a Neural
Turing Machine \citep{graves2014ntm}.

Second, opcode dispatch is, in principle, a convex combination over the
handler outputs. Let $\ell\in\mathbb{R}^{K}$ be a score vector over the
$K=256$ possible opcodes---the \emph{logits}. In the hard machine $\ell$ is a
one-hot indicator of the decoded opcode, and in the relaxed form it is a soft
score. With temperature $T>0$ and softmax weights $w=\softmax(\ell/T)$, that
is $w_k=e^{\ell_k/T}/\sum_j e^{\ell_j/T}$, the dispatch output is
\begin{equation}
\mathrm{select}(\ell,V;T) = w^{\top} V,
\label{eq:select}
\end{equation}
where the rows of $V$ are the candidate handler outputs. This collapses to the
hard pick as $T\to0$ \citep{jang2017gumbel}. In the executed step we use the
saturated form directly---a hard switch on the decoded opcode---so the forward
pass is exact, and the relaxed form~\eqref{eq:select} is what a fully soft variant
would use. The same softmax select governs \emph{any} discrete choice, with $T$
setting how widely the soft pick spreads---and with it the range over which
gradients flow. Figure~\ref{fig:tempheat} shows this on a visualisable selection
(placing a target sprite among candidate screen columns rather than choosing an
opcode): raising $T$ spreads the pick over neighbouring columns, widening the
\emph{capture range} over which gradients reach the input, while the forward render
stays exact.

\begin{figure}[t]
\centering
\includegraphics[width=\columnwidth]{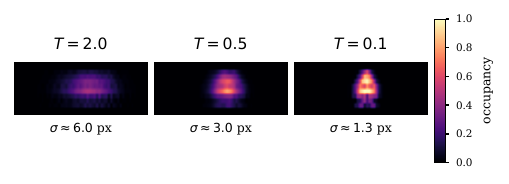}
\caption{Soft-select temperature $T$ in pixel space, on a visualisable selection:
a target sprite's screen column chosen with $w=\softmax(\ell/T)$. For each $T$ we
sample the column $100$ times, render the hard sprite, and average, so each pixel
is the expected occupancy. Low $T$ is a single sharp sprite (the hard pick,
$\sigma\approx1.3$ px). Raising $T$ spreads it over neighbouring columns
($\sigma\approx3$, then $6$ px), widening the range over which gradients reach---a
knob for the gradient's capture range. The forward pass stays bit-exact
(Theorem~\ref{thm:exact}).}
\label{fig:tempheat}
\end{figure}

Third, a conditional branch---``if the status flag is set, add the offset
$\delta$ to the program counter (\textsc{pc})''---is relaxed with a sigmoid
gate. The branch condition is one processor status flag, a single bit in the
hard machine. In soft mode we let it take a real value $f\in[0,1]$, a relaxed
flag. Writing $f$ as a logit $z = s\,(2f-1)$, where the sign $s$ selects
branch-when-set ($s={+}1$) or branch-when-clear ($s={-}1$), and with sharpness
$\alpha$, the gate $g$ and relaxed program counter are
\begin{equation}
g = \sigma(\alpha z),\qquad
\mathrm{pc}_{\mathrm{soft}} = (1-g)\,\mathrm{pc}_{\bar b} + g\,\mathrm{pc}_{b}
= \mathrm{pc}_{\bar b} + g\,\delta,
\label{eq:branch}
\end{equation}
where $\mathrm{pc}_{\bar b}$ is the fall-through (not-taken) counter and
$\mathrm{pc}_{b}=\mathrm{pc}_{\bar b}+\delta$ is the taken counter, so the blend
simplifies to $\mathrm{pc}_{\bar b}+g\,\delta$. As $\alpha\to\infty$ the gate
saturates to a hard branch. For the fully relaxed variant (no straight-through
correction) $\alpha$ must be large enough that the gate's residual uncertainty
$g(1-g)$ stays below the integer-rounding threshold of $\mathrm{pc}$, so the
rounded counter still matches the hard branch (see the supplementary material).

\begin{figure}[t]
\centering
\includegraphics[width=\columnwidth]{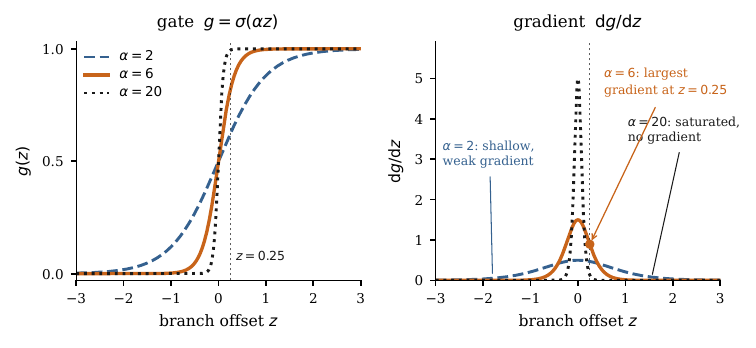}
\caption{The soft-branch gate $g=\sigma(\alpha z)$ (left) and its gradient
$\mathrm{d}g/\mathrm{d}z=\alpha\,\sigma(\alpha z)(1-\sigma(\alpha z))$ (right) for
$\alpha\in\{2,6,20\}$, with the operating point $z{=}0.25$ marked. Too large an
$\alpha$ (here $20$) saturates the gate, leaving essentially no gradient away from
the switch. Too small an $\alpha$ (here $2$) gives a shallow gate whose gradient is
weak and barely varies with $z$. The intermediate $\alpha{=}6$ places the operating
point in the sigmoid's steep band and yields the largest, most localised gradient.}
\label{fig:sigmoid}
\end{figure}

These relaxations are connected to the exact machine by a
\emph{straight-through estimator} (STE) \citep{bengio2013estimating}. For any
soft quantity with a matching hard value,
\begin{equation}
\mathrm{STE}(\text{soft},\text{hard}) = \text{soft} + \sg(\text{hard}-\text{soft}),
\label{eq:ste}
\end{equation}
where $\sg(\cdot)$ is the stop-gradient operator. The forward value is
exactly \textit{hard}, and the backward pass uses the derivative of the soft
value. The
round and clamp operations are treated the same way (forward exact, backward
identity inside the valid range). The consequence, made precise next, is that
soft mode reproduces the hard machine \emph{exactly} in the forward pass and
differs only in the gradient it exposes.

This is the same device that makes \emph{max-pooling} differentiable, and it
applies far beyond branches. Every hard decision in the machine is a discrete
\emph{selection}: which opcode handler runs, whether a branch is taken, which
object owns a pixel under TIA priority, which sprite bit lights it. Each is a
switch whose winner we record in the forward pass, and the backward pass then
routes the gradient to the selected value, exactly as max-pooling routes its
gradient to the argmax input. With the switch stored, the entire \emph{content}
path---register values, sprite colours and graphics bits, priority
compositing---is differentiable with no approximation and a bit-exact forward
pass. The one quantity a stored switch cannot differentiate is a discrete
\emph{index}: the column at which a sprite is placed by strobe timing, for the
same reason max-pooling gives no gradient with respect to the pooling location.
There, and only there, a differentiable sampler---a sub-pixel triangular
(bilinear) kernel, as in spatial transformer networks
\citep{jaderberg2015spatial}---restores the position gradient.

\section{Soft Equals Hard: Equivalence and Gradients}

We now state what the construction guarantees. Let $\Phi_{\mathrm{H}}$ and
$\Phi_{\mathrm{S}}$ be the one-step transition maps of the hard and soft
emulators, acting on the full state $x$ (CPU registers, RAM, RIOT, TIA, and
frame buffer). We state both theorems here and give only \emph{proof sketches}.
The complete, formal proofs are in the supplementary material. The first result
is that the two maps agree in value.

\begin{theorem}[Exact forward equivalence]
\label{thm:exact}
For every reachable state $x$ and every finite sharpness $\alpha$ and
temperature $T$, the soft and hard one-step maps are equal by design,
$\Phi_{\mathrm{S}}(x) = \Phi_{\mathrm{H}}(x)$, with identical float32
representations. Consequently, over a trajectory of $N$ steps,
$x^{\mathrm{S}}_t = x^{\mathrm{H}}_t$ for all $t\le N$. The modes differ only in
the Jacobian
$\partial\Phi_{\mathrm{S}}/\partial x$, which is defined and generically
nonzero where $\partial\Phi_{\mathrm{H}}/\partial x$ is zero or undefined.
\end{theorem}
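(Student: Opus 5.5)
The plan is a structural induction over the composition of primitive operations that make up one step $\Phi_{\mathrm{S}}$, followed by an induction over time. First I will decompose the soft step into a finite sequence of primitive operations. Each primitive belongs to one of four classes:
\begin{enumerate}
\item one-hot reads $\mathrm{peek}(r,a)$ as in~\eqref{eq:peek};
\item STE-wrapped relaxations $\mathrm{STE}(\text{soft},\text{hard})$ as in~\eqref{eq:ste}, covering the branch gate~\eqref{eq:branch} and the round and clamp operations;
\item stored-switch selections, namely opcode dispatch in saturated form, TIA priority, and sprite bits;
\item ordinary integer or float arithmetic shared verbatim with the hard path.
\end{enumerate}
For each class I will show that its \emph{forward value} equals the hard counterpart exactly, not merely up to rounding.

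Class (1): the sum $\sum_i [\![i=a]\!] r_i$ has exactly one nonzero term, and every other term is an exact float32 zero. Adding zeros is exact in IEEE arithmetic, so the result is bitwise $r_a$. Byte values are at most $255$ and addresses are integers, both exactly representable in float32, so nothing is lost in the representation either.

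Class (2): the forward value is $\text{soft} + (\text{hard} - \text{soft})$. This is the most delicate step, because in floating point that expression need not round back to $\text{hard}$. I will argue it in one of two ways:
\begin{enumerate}
\item The implementation evaluates the STE so that the forward value is literally \emph{hard}; for example, it is computed as $\text{hard} + (\text{soft} - \sg(\text{soft}))$, and the term $\text{soft} - \sg(\text{soft})$ is an exact $0$ for any finite $\text{soft}$.
\item Alternatively, I will use a Sterbenz-type argument. Hard and soft values are integers of bounded size (counters below $2^{16}$, bytes below $2^8$) and the soft values lie within a bounded distance of them, so the sum rounds back to the exact integer.
\end{enumerate}
I will try the first route first, since it is independent of $\alpha$ and $T$. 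Finiteness of $\alpha$ and $T$ enters only to guarantee that the soft term is finite, so no NaN or Inf poisons the sum. That is precisely why the theorem is stated for finite sharpness and temperature.

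Class (3): the forward pass evaluates the recorded winner, so its value is the hard value by construction. Class (4): the code is identical, so its value is identical.

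With each primitive value-exact, the single-step claim $\Phi_{\mathrm{S}}(x)=\Phi_{\mathrm{H}}(x)$ follows by induction over the primitive sequence. The soft and hard paths share one code path, so every intermediate of the soft step is the same float32 bit pattern as in the hard step, and control decisions therefore coincide. The trajectory statement then follows by induction on $t$. The base case is $x^{\mathrm{S}}_0=x^{\mathrm{H}}_0$. For the inductive step, $x^{\mathrm{S}}_{t+1}=\Phi_{\mathrm{S}}(x^{\mathrm{S}}_t)=\Phi_{\mathrm{S}}(x^{\mathrm{H}}_t)=\Phi_{\mathrm{H}}(x^{\mathrm{H}}_t)$. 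I will restrict to reachable states so that every index lies in range and the float bounds used in class (2) hold.

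For the Jacobian clause, I will note that the hard primitives are integer-valued, hence piecewise constant, so their derivative is zero or undefined. The soft derivatives are as follows:
\begin{itemize}
\item for reads, $\mathbf{1}_a$;
\item for the gate, $\alpha\sigma(\alpha z)(1-\sigma(\alpha z))\,\delta$, which is positive whenever $\delta\neq0$;
\item for selects, the softmax Jacobian.
\end{itemize}
These are nonzero off a measure-zero set of parameters, which gives ``generically nonzero.''

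I expect the main obstacle to be the float32 exactness of the STE in class (2). To close it I must either pin down the implemented evaluation order or carry explicit integer-magnitude bounds through every relaxed site.
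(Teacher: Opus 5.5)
Your proposal is correct and follows the paper's proof. Both modes run the same hard-switched handler, and forward equality is checked primitive by primitive: the one-hot read, the shared integer and round arithmetic, and $\mathrm{STE}(\mathrm{pc}_{\mathrm{soft}},\mathrm{pc}_{\mathrm{hard}})=\mathrm{pc}_{\mathrm{hard}}$. The trajectory claim is an induction on $t$, and the gradient clause comes from $\alpha g(1-g)\delta$ and $\mathbf{1}_a$. It does not come from a softmax Jacobian, since the executed dispatch is the saturated switch and the softmax is never evaluated.

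Your float32 concern about $\mathrm{soft}+\sg(\mathrm{hard}-\mathrm{soft})$ is sharper than the paper, which only invokes exact representability of integers in $[-2^{24},2^{24}]$ even though $\mathrm{pc}_{\mathrm{soft}}$ is not an integer. Of your two fixes, the Sterbenz route is the one that fits \eqref{eq:ste} as written. It closes the step for every finite $\alpha$ because $|\mathrm{pc}_{\mathrm{soft}}-\mathrm{pc}_{\mathrm{hard}}|\le|\delta|\le 128$ while cartridge-space counters are at least $4096$, so $\mathrm{pc}_{\mathrm{hard}}/2\le\mathrm{pc}_{\mathrm{soft}}\le 2\,\mathrm{pc}_{\mathrm{hard}}$ and the difference is computed exactly. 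It is this relative closeness, not the mere bounded distance your route states, that the argument needs.
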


\begin{proof}[Proof sketch]
Each handler's forward output composes one-hot ROM/RAM reads~\eqref{eq:peek},
integer/round arithmetic, a hard opcode dispatch, and the straight-through
branch~\eqref{eq:ste}, each forward-identical to the hard handler, and the
stop-gradient alters only the backward rule. The one-step equality then extends
to whole trajectories by induction on the step index. The complete proof, with
the per-primitive equalities and the formal induction, is in the supplementary
material.
\end{proof}

By Theorem~\ref{thm:exact}, attribution in soft mode is computed on the exact
hard trajectory rather than an approximation, and the forward error is
identically zero independent of the temperature. The remaining question is the
behaviour of a \emph{fully} relaxed variant that does not apply the
straight-through correction. For that we need a mild non-degeneracy assumption.

\begin{assumption}[Decision margins]
\label{ass:dm}
Along the executed trajectory of length $N$, every discrete decision has a
strictly positive margin. For opcode dispatch at step $t$ with logits
$\ell^{(t)}$ and winner $k^\star_t$, the logit gap
$\Delta_t = \ell^{(t)}_{k^\star_t}-\max_{k\ne k^\star_t}\ell^{(t)}_k$ is
positive, and for each conditional branch with flag logit $z_t$, the margin
$m_t = |z_t|$ is positive. Let $\Delta_{\min}=\min_t \Delta_t>0$ and
$m_{\min}=\min_t m_t>0$.
\end{assumption}

In the executed dispatch the flags are exact bits, so $z_t = \pm 1$ and
$m_{\min}=1$, so Assumption~\ref{ass:dm} holds trivially for branches and is a
statement only about ties in a fully soft dispatch.

\begin{theorem}[Temperature-limit bound]
\label{thm:limit}
Consider the fully relaxed emulator that uses the softmax
dispatch~\eqref{eq:select}, sigmoid branch~\eqref{eq:branch}, and a
distance-softmax (temperature-$T$) read \emph{without} the straight-through
correction. Under Assumption~\ref{ass:dm}, its one-step map converges to the hard
map as $T\to0$ and $\alpha\to\infty$, with per-step deviation
\begin{equation}
\begin{aligned}
\big\|\Phi^{T}_{\mathrm{S}}(x)-\Phi_{\mathrm{H}}(x)\big\|_\infty
\;\le\; C\Big[&(K-1)\,e^{-\Delta_{\min}/T}\\
&{}+ e^{-\alpha m_{\min}} + \rho\,e^{-1/T}\Big],
\end{aligned}
\label{eq:bound}
\end{equation}
where $C$ bounds the value range (bytes $\le255$, branch offsets $\le256$) and
$\rho$ the number of reads per step. Over $N$ steps, with Lipschitz constant
$L\ge1$ for error propagation, the trajectory deviation is at most
$\frac{L^{N}-1}{L-1}$ times~\eqref{eq:bound}, which is $\mathcal{O}(e^{-c/T})$
with $c=\min(\Delta_{\min},1)$.
\end{theorem}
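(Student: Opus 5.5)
The plan is to decompose the one-step deviation into three independent sources, one per relaxed primitive, and bound each by its distance from the saturated hard choice. Write $\Phi^{T}_{\mathrm{S}}(x)-\Phi_{\mathrm{H}}(x)$ as a telescoping sum in which the exact primitives are replaced one at a time by their relaxed counterparts: first the reads, then the branch gate, then the dispatch. Each term involves only one relaxation. The triangle inequality then gives the three summands of~\eqref{eq:bound}. The constant $C$ absorbs the value range: every handler output is a byte or a PC offset, so it is at most $256$. Differences between candidate outputs are therefore bounded by $C$. The handlers are otherwise unchanged, so they are exactly the hard ones of Theorem~\ref{thm:exact}.

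For the dispatch term, I use $\mathrm{select}(\ell,V;T)-V_{k^\star}=\sum_{k\ne k^\star}w_k(V_k-V_{k^\star})$. Each non-winning weight satisfies $w_k\le e^{(\ell_k-\ell_{k^\star})/T}\le e^{-\Delta_{\min}/T}$, using Assumption~\ref{ass:dm}. This yields $C(K-1)e^{-\Delta_{\min}/T}$.

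For the branch, \eqref{eq:branch} gives a deviation of $|g-[\![z>0]\!]|\,|\delta|$. Here $|g-[\![z>0]\!]|=\sigma(-\alpha|z|)\le e^{-\alpha m_{\min}}$, which gives the second term.

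For the distance-softmax read, the weights are proportional to $e^{-|i-a|/T}$ over $i\ne a$. I bound the off-target mass by $\sum_{d\ge1}2e^{-d/T}$. This is $\le 2e^{-1/T}/(1-e^{-1/T})$, which is $\mathcal{O}(e^{-1/T})$, and the numerical factor can be absorbed into $C$. Summing over the $\rho$ reads per step gives $\rho e^{-1/T}$.

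The main obstacle is that the three errors are not independent. A slightly wrong read feeds the logits and the flag used by later primitives in the same step. It could in principle erode the margins $\Delta_t$ and $m_t$ that the other bounds rely on. I would handle this by applying Assumption~\ref{ass:dm} to the logits evaluated on the exact hard values and ordering the telescoping so that each relaxed primitive acts on hard inputs. The cross-terms are then products of small quantities of higher order, and for small $T$ they can be absorbed into $C$. If this fails, I would instead require $T$ small enough that read errors stay below half the margin, so that the winners are preserved.

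For the trajectory, let $e_t=\|x^{\mathrm{S}}_t-x^{\mathrm{H}}_t\|_\infty$. Then
$$e_{t+1}\le\|\Phi^{T}_{\mathrm{S}}(x^{\mathrm{S}}_t)-\Phi_{\mathrm{H}}(x^{\mathrm{S}}_t)\|_\infty+\|\Phi_{\mathrm{H}}(x^{\mathrm{S}}_t)-\Phi_{\mathrm{H}}(x^{\mathrm{H}}_t)\|_\infty\le\varepsilon+Le_t,$$
where $\varepsilon$ is the bound~\eqref{eq:bound} and $L$ is the assumed propagation constant. With $e_0=0$, unrolling gives $e_N\le\varepsilon\sum_{j<N}L^j=\frac{L^N-1}{L-1}\varepsilon$. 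As $\alpha\to\infty$ the branch term vanishes. Taking $\alpha$ of order at least $1/T$, every term is $\mathcal{O}(e^{-c/T})$ with $c=\min(\Delta_{\min},1)$ for fixed $N$. The convergence of the one-step map follows from the same estimate.
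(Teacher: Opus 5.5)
Your proposal follows the paper's proof essentially step for step: the same three per-primitive bounds are used, namely the non-winning softmax mass $\le (K-1)e^{-\Delta_{\min}/T}$ times $C$, the gate error $\sigma(-\alpha m)\le e^{-\alpha m_{\min}}$ times $|\delta|\le C$, and the geometric tail $2e^{-1/T}/(1-e^{-1/T})$ of the off-target read weights absorbed into $C$; these are summed and then propagated via $e_{t+1}\le L e_t+\varepsilon$. The paper simply adds the three terms and never addresses the within-step coupling you flag (read errors eroding the margins) or the need to tie $\alpha$ to $T$ for the final $\mathcal{O}(e^{-c/T})$ claim, so those remarks are refinements beyond the paper's argument rather than departures from it.
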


\begin{proof}[Proof sketch]
Bound the non-winning softmax mass by $(K-1)e^{-\Delta_{\min}/T}$, the sigmoid
error by $e^{-\alpha m_{\min}}$, and the temperature-read pull by $e^{-1/T}$ (no
margin needed), then propagate over $N$ steps with the Lipschitz constant $L$. The
complete proof is in the supplementary material.
\end{proof}

\begin{corollary}
\label{cor:ste}
The straight-through estimator~\eqref{eq:ste} sets the bracket
in~\eqref{eq:bound} to zero for all $T,\alpha$: its forward value is the hard
($T\to0$) limit \emph{exactly at finite temperature}, while its backward pass
uses the chosen soft relaxation's gradient. It is therefore a
\emph{surrogate-gradient estimator}---the forward equals the hard machine, the
gradient does not differentiate it---not the limiting hard derivative.
Theorem~\ref{thm:exact} is thus the $T\to0$ forward limit, achieved by
construction.
\end{corollary}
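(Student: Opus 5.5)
The corollary follows from Theorem~\ref{thm:limit} and the definition of the straight-through estimator~\eqref{eq:ste}. I would argue separately for the forward value and the backward rule.

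\textbf{Forward value.} Fix $T>0$ and $\alpha<\infty$, and take any relaxed primitive: a softmax dispatch~\eqref{eq:select}, a sigmoid branch~\eqref{eq:branch}, or a temperature read. Let $\text{soft}$ be its relaxed output and $\text{hard}$ its saturated counterpart, i.e.\ the argmax handler output, the bit-branch counter, or the one-hot read~\eqref{eq:peek}. Since $\sg$ is the identity in the forward pass, $\mathrm{STE}(\text{soft},\text{hard})=\text{soft}+(\text{hard}-\text{soft})=\text{hard}$ exactly. This holds in real arithmetic. In float32 it also holds, provided the implementation returns the stored hard value rather than recomputing $\text{soft}+(\text{hard}-\text{soft})$ in floating point. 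I would argue that this is how the ports evaluate it, with the forward equal to the selected hard branch by construction.

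\textbf{Zero bracket.} Wrapping each of the three primitives then makes each per-primitive deviation in the proof of Theorem~\ref{thm:limit} vanish. These are the softmax non-winner mass, the sigmoid residual, and the read pull. So the per-step bound~\eqref{eq:bound} holds with bracket $0$ for every $T,\alpha$. Propagating over $N$ steps multiplies this zero by $(L^N-1)/(L-1)$, so the trajectory deviation is also zero. This recovers Theorem~\ref{thm:exact}'s trajectory statement. It is also precisely the value the bound predicts only in the limit $T\to0$, $\alpha\to\infty$, which explains the phrase "the $T\to0$ forward limit, achieved by construction."

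\textbf{Backward rule and the surrogate claim.} Differentiating~\eqref{eq:ste}, the $\sg$ term contributes no derivative, so $\partial\,\mathrm{STE}/\partial\theta=\partial\,\text{soft}/\partial\theta$. This is the relaxation's gradient at the chosen $T,\alpha$. To show it is not the hard derivative, it suffices to exhibit one primitive where the two disagree. Take the branch gate. The hard counter is piecewise constant in $z$, so its derivative is $0$ almost everywhere. The STE derivative is $\alpha\sigma(\alpha z)(1-\sigma(\alpha z))\delta$, which is nonzero for finite $\alpha$. Moreover it tends to $0$ only as $\alpha\to\infty$ at $|z|=m_{\min}$, and it does not converge to a meaningful limit at $z=0$. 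Hence the gradient is a surrogate, and no limit of it recovers "the hard derivative."

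I expect the only delicate point to be the float32 exactness claim, which is more an implementation invariant than a mathematical step. Everything else is direct substitution into~\eqref{eq:ste} and~\eqref{eq:bound}.
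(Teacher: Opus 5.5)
Your argument is essentially the one the paper relies on. The paper reads the corollary off the proof of Theorem~\ref{thm:exact}: $\sg$ is the identity in the forward pass, so $\mathrm{STE}(\text{soft},\text{hard})=\text{hard}$ for all $T,\alpha$. The backward pass returns $\alpha g(1-g)\delta$ where the hard branch has zero or undefined derivative, which is your branch example. A few minor differences:
\begin{itemize}
\item The paper applies the STE only to the branch. It zeroes the dispatch and read terms by using the saturated switch and the one-hot read directly.
\item The paper gets the trajectory claim by plain induction, which needs no Lipschitz constant.
\item Your float32 caveat is handled not by returning a stored hard value but by the round-to-integer the ports apply after every soft primitive. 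The floating-point error of $\text{soft}+(\text{hard}-\text{soft})$ is far below $\tfrac12$ at VCS magnitudes, so the rounded value is exactly the hard integer.
\end{itemize}

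You should drop the closing claim that ``no limit of it recovers the hard derivative.'' It is false, and it contradicts your own preceding remark. At the executed flags $z=\pm1$, the surrogate $\alpha\sigma(\alpha)(1-\sigma(\alpha))\delta$ tends to $0$ as $\alpha\to\infty$, and $0$ \emph{is} the hard derivative there. The paper says exactly this: the hard limit is exact in value but degenerate in gradient. In the distributional sense, $\alpha\sigma(\alpha z)(1-\sigma(\alpha z))\delta$ converges to $\delta$ times a Dirac mass at $z=0$, which is the derivative of the step. The corollary's ``not the limiting hard derivative'' only asserts that at finite $T,\alpha$ the backward pass returns $\partial\,\text{soft}$ rather than the derivative of the hard map. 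Your comparison ``nonzero for finite $\alpha$ versus $0$ almost everywhere'' already establishes that, so the limit claim is unnecessary as well as wrong.
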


These results are matched by the implementation's unit tests, which confirm
that the softmax dispatch saturates to the exact hard pick at large logits,
that the sigmoid branch saturates to the exact taken/not-taken program
counter at large $\alpha$, and that the soft memory read collapses to
ordinary indexing at small $T$---the empirical face of
Theorems~\ref{thm:exact}--\ref{thm:limit}.

\section{Evaluation Methodology}

We report two conformance metrics and an implementation-effort estimate, defined
here so the results are unambiguous. The two conformance metrics are evaluated on the 64 ALE games in
our \textsc{xitari} configuration---\textsc{xitari} being the emulator DeepMind
released with DQN; the canonical DQN results were reported on a 49-game subset
\citep{mnih2015human}, later work on an Atari-57 set, all drawn from ALE's larger
catalogue \citep{machado2018revisiting}. For each game we run the standard ALE
boot (60 no-op frames plus four resets) and then a fixed action stream. A game is
\emph{RAM-exact} if its 128-byte RIOT RAM is identical to \textsc{xitari} on every
frame, and \emph{pixel-exact} if its full $210\times160$ frame buffer (palette
indices) is identical on every frame. We report the count of each out of 64. We
estimate \emph{active} implementation wall-time from the version-control log.
\textsc{jutari} is the faster engine per single environment on the CPU, whereas
\textsc{jaxtari} scales on the GPU---batched differentiable rollouts reach
${\sim}3$M environment-steps per second on a commodity card---with the full
single-step, compiled-rollout, batched, and GPU numbers in the supplementary material.

\section{Results}

Both ports are bit-for-bit identical to \textsc{xitari} on \emph{all} 64 ALE
games: 64/64 games are RAM-exact (zero differing bytes across every frame) and
64/64 are pixel-exact (zero differing pixels across every frame).

\subsection{Effort: Person-Months into Days}

The project comprises 373 commits over a 29-day calendar span (a cumulative
commit-time plot is in the supplement). Splitting the commit stream at three-hour
gaps---the session boundary defined in the methodology, with commits within a
session a median ${\sim}13$ minutes apart---yields 52 active sessions totalling
\textbf{136.8 hours}, about \textbf{5.7 round-the-clock days}, and the split is
insensitive to the cutoff (106 hours at two hours, 162 at four). The remaining
${\sim}80\%$ of the span was idle, mostly travel without reliable internet. The
work used Anthropic's Claude Opus~4.7 and 4.8 as the primary coding models, with a
brief autonomous stint by Fable~5. Against this we built two independent emulators
totalling ${\sim}18{,}700$ lines of source and a comparable volume of tests
(${\sim}1{,}990$ cases across the two ports), and chased 71 numbered bug
investigations to a bit-exact result.

For a baseline, \textsc{xitari} is a C\texttt{++} core of about 46{,}000 lines
derived from Stella, with sources dating to the 1990s and an \textsc{xitari} fork
spanning 2014--2017 \citep{xitari,stella}---many person-years of work. Our two
ports reproduce its behaviour (bit-exact RAM, pixel-exact screens) in about
18{,}700 lines and 137 active hours. We do not claim a controlled comparison---the
reference is larger in scope (audio, every cartridge type, tooling)---but the
contrast between a multi-year reference and a six-day reimplementation is the right
order of magnitude for what agentic tools change.

\subsection{Proof of Concept: Ground-Truth Gradients}

\begin{figure}[t]
\centering
\includegraphics[width=\columnwidth]{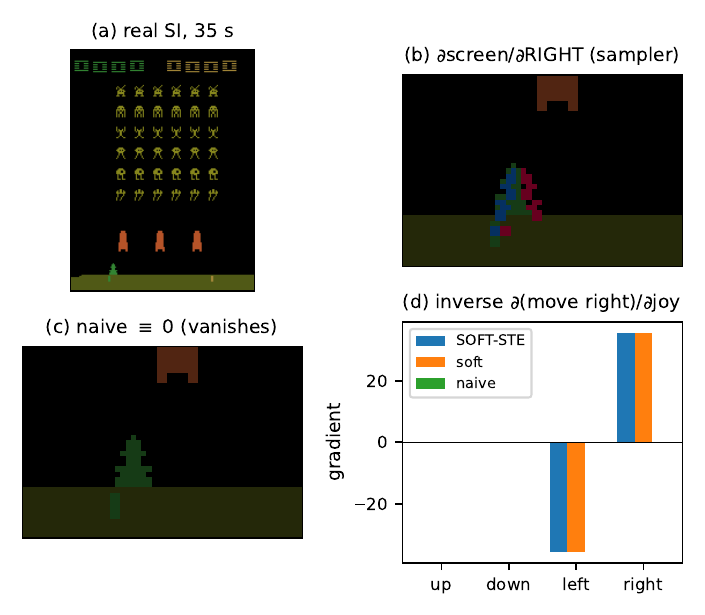}
\caption{Ground-truth gradients on the \emph{real} Space Invaders ROM in the
differentiable VCS. \textbf{(a)} the live game $35$\,s after boot---the scene the
real ROM renders, pixel-exact to \textsc{xitari}. \textbf{(b)} the screen's
directional derivative with respect to the \textsc{right} joystick, recovered by a
differentiable sampler over the player-cannon position (a sub-pixel bilinear
kernel, as in spatial-transformer networks); it lights exactly the cannon edges
that move. Because soft mode is forward-exact (Theorem~\ref{thm:exact}), this
gradient is \emph{identical} for SOFT-STE and the soft (relaxed) variant (the
leaky, past-boundary variant is studied in the supplement). \textbf{(c)} the
\emph{naive} gradient runs through the discrete sprite-position index and is
identically zero---it would wrongly report ``the joystick does not affect the
screen.'' \textbf{(d)} the inverse problem,
$\partial(\text{move cannon right})/\partial\text{joystick}$: SOFT-STE and soft
recover the control mapping (push \textsc{right}, not up/down), while the naive
gradient vanishes. Every recovered quantity can be \emph{checked} against the
known wiring---the whole point of a ground-truth substrate.}
\label{fig:xai}
\end{figure}

The point of the foundation is that gradients computed on it can be
\emph{checked}. Because soft mode is forward-exact (Theorem~\ref{thm:exact}), the
surrogate gradients are taken along the true hard trajectory, not an
approximation; we illustrate this end-to-end on the \emph{real} Space Invaders
ROM, rolled to a live scene (Figure~\ref{fig:xai}a). It is a proof of concept
that the substrate behaves as intended, not an XAI evaluation.

The joystick moves the cannon, but the path to the screen runs through a discrete
sprite \emph{position} index, so a naive gradient is identically zero
(Figure~\ref{fig:xai}c)---the ground truth itself warning that a naive saliency
would report ``the joystick does not affect the screen.'' A differentiable
sampler over the position (a sub-pixel bilinear kernel, as in spatial-transformer
networks~\citep{jaderberg2015spatial}) restores it: the per-direction map lights
exactly the cannon edges that move (Figure~\ref{fig:xai}b), and the inverse
recovers the control mapping---push \textsc{right}, not up/down
(Figure~\ref{fig:xai}d), identically for SOFT-STE and the soft relaxed variant
(the leaky variant is in the supplement). Because the wiring is known these
gradients can be \emph{scored}: the sampler puts $100\%$ of its
$\partial$screen$/\partial\textsc{right}$ mass on the cannon and the naive map
none ($\equiv0$), and the inverse Jacobian is $+35.7/{-}35.7$ for
\textsc{right}/\textsc{left} and $0$ up/down (axis and sign right), the naive
gradient silent.

\section{Discussion}

This paper delivers five results. We built two independent differentiable ports of
the Atari VCS, in Julia and JAX, bit-exact in RAM and pixel-exact on screen against
\textsc{xitari} on all 64 DQN games; proved the soft forward pass equals the hard
one exactly, so gradients run on the true execution rather than an approximation;
measured the implementation effort at about 137 active hours, built largely by
coding agents; found that a bit-exact re-implementation is itself an audit tool for
its reference; and showed qualitatively that gradients localise to the correct
hardware and can be checked against the known mechanism. A theoretical analysis of
$\alpha$ and $T$ shows that the fully relaxed variant, too, is forward-exact for
small $T$ and large $\alpha$. The supplement collects the full proofs, the
numerical relaxation analyses ($\alpha$/$T$ gradient maps and the
forward-bit-exactness boundaries), and the beam-timed comparison videos.

Building the ports also revealed a bug in the reference. Driving them to 64/64
pixel exactness surfaced a latent non-determinism in \textsc{xitari} itself: one
title's attract-mode demo reads \emph{uninitialised} on-cartridge Superchip RAM as
a random-number source, and \textsc{xitari} seeds that RAM from
\texttt{time(NULL)} while ignoring its own seed parameter, so the title renders
differently on every run despite a pinned seed---a divergence invisible to RAM
checks and visible only on screen. We reach full conformance by pinning the seed
deterministically in both emulators and have prepared an upstream fix: a bit-exact
re-implementation is useful in both directions, the reference validating the port
and the port auditing the reference.

The hardest divergences were all in timing---the RIOT timer's reset state, the
TIA's sub-cycle object rendering, and the boot-time format probe that seeds
free-running counters games never re-initialise. As one model anecdote, an early
attempt with Kimi-K2.6 (a one-trillion-parameter model) ended when it refused to
start, judging the port to be months of work.

\paragraph{What we do not claim.} This paper builds and validates the substrate;
it is not yet an XAI benchmark. We neither rank attribution methods nor study a
learned agent---the gradients here explain the \emph{known machine}, not a policy,
and are surrogate estimators of the hard map (Corollary~\ref{cor:ste}), not the
hard derivative.

\paragraph{Limitations.} The effort figure is an order-of-magnitude estimate, not
a controlled trial; this does not affect the core claims (Theorem~\ref{thm:exact}
and the measured 64/64 conformance). Audio is not implemented, being irrelevant to
a pixel- and RAM-level XAI substrate. Evaluating attribution methods on this
ground truth, and extending it from the known environment to learned agents, is
the program this foundation opens.

\section{Conclusion}

We set out to give explainable AI a study object that escapes the
known-but-trivial / complex-but-unknown dichotomy: a genuine, complex
information-processing system that is at once fully specified and fully
differentiable. We built it twice and validated both ports bit-for-bit against the
reference, with a forward pass proven bit-exact yet exposing \emph{surrogate}
gradients where bit logic has none. An attribution can now---to our knowledge, for
the first time on a system of this complexity---be scored against the truth: the
XAI testbed we hope the community takes up.

\section*{Ethical Statement}
This work involves no human subjects, personal data, or sensitive information. The
commercial Atari ROMs are used only as fixed inputs; they are copyrighted and
\emph{not} redistributed---we release SHA-256 hashes for identification only. The
agentic method is dual-use; pairing autonomy with a strict, automatically checked
conformance oracle mitigates its central risk, silent divergence from the
reference, without eliminating it.

\paragraph{Supplementary video.} A narrated walkthrough of the two ports, the
bit-exact side-by-side comparisons, and the gradient study is available at
\url{https://github.com/akmaier/UnderstandingVCS/blob/main/jutari_paper/presentation/presentation.mp4}.

{\small
\bibliography{references}
}

\clearpage
\appendix
\section*{Supplementary Material}

This supplement collects the analyses behind the main paper. We give the full
proofs of the two theorems; a numerical study of how the relaxation parameters
$\alpha$ and $T$ shape the gradient; an empirical and analytic account of when the
\emph{fully relaxed} forward stays bit-exact (a cast-margin model, its
$(\alpha,T)$ likelihood map, and boundary measurements on the full simulator),
accompanied by the beam-timed comparison videos; and the implementation-effort
timeline referenced in the Results. It is self-contained: we first restate the
soft/hard formulation, prove the forward-equivalence and temperature-limit
results, then study the relaxation gradient and forward bit-exactness, and finally
show the effort timeline.

\section{Setup and Notation}

The differentiable emulator runs one of two transition maps over the full VCS
state $x=(\text{registers},\text{RAM},\text{RIOT},\text{TIA},\text{frame
buffer})$: the bit-exact \emph{hard} map $\Phi_{\mathrm{H}}$ and the
differentiable \emph{soft} map $\Phi_{\mathrm{S}}$. In both, the handler that
runs is the one for the opcode byte at the program counter $\mathrm{pc}$. The
handlers are assembled from five primitives, which we restate here so that the
proofs are self-contained.

The first primitive is the memory read. A read of a memory tensor
$r\in\mathbb{R}^{M}$ of size $M$ (the ROM or the RAM) at an integer address
$a\in\{0,\dots,M-1\}$ is written as a dot product against the one-hot address
indicator $\mathbf{1}_a$,
\begin{equation}
\mathrm{peek}(r,a)=\mathbf{1}_a^{\top}r=\textstyle\sum_i[\![i=a]\!]\,r_i=r_a ,
\label{eq:s-peek}
\end{equation}
so the forward value is the array element $r_a$, while the gradient with respect
to $r$ is the one-hot vector $\mathbf{1}_a$.

The second primitive is opcode dispatch. With a score vector (logits)
$\ell\in\mathbb{R}^{K}$ over the $K=256$ opcodes, a temperature $T>0$, and the
softmax weights $w=\softmax(\ell/T)$, the relaxed dispatch over the candidate
handler-output rows $V$ is the convex combination
\begin{equation}
\mathrm{select}(\ell,V;T)=w^{\top}V,\qquad
w_k=\frac{e^{\ell_k/T}}{\sum_j e^{\ell_j/T}} .
\label{eq:s-select}
\end{equation}

The third primitive is the conditional branch. For a relaxed status flag
$f\in[0,1]$ written as a logit $z=s(2f-1)$, where the sign $s$ selects
branch-when-set or branch-when-clear, and a sharpness $\alpha$, the gate and the
relaxed program counter are
\begin{equation}
g=\sigma(\alpha z),\qquad
\mathrm{pc}_{\mathrm{soft}}=(1-g)\,\mathrm{pc}_{\bar b}+g\,\mathrm{pc}_{b}
=\mathrm{pc}_{\bar b}+g\,\delta ,
\label{eq:s-branch}
\end{equation}
where $\mathrm{pc}_{\bar b}$ is the fall-through (not-taken) counter and
$\mathrm{pc}_{b}=\mathrm{pc}_{\bar b}+\delta$ is the taken counter for the signed
branch offset $\delta$.

The fourth primitive is the straight-through estimator. For any soft value that
has a matching hard value it returns
\begin{equation}
\mathrm{STE}(\text{soft},\text{hard})=\text{soft}+\sg(\text{hard}-\text{soft}),
\label{eq:s-ste}
\end{equation}
where $\sg(\cdot)$ is the stop-gradient operator, so the forward value equals
\textit{hard} while the backward pass uses the gradient of \textit{soft}.

The fifth primitive is the \emph{relaxed read}, the differentiable counterpart of
the exact read~\eqref{eq:s-peek}. It blurs the integer address with a
distance-softmax of width $T$, returning the temperature-weighted average of the
neighbouring cells,
\begin{equation}
\mathrm{peek}_{\mathrm{R}}(r,a;T)=\textstyle\sum_k w_k\,r_{a+k},\qquad
w_k=\frac{e^{-|k|/T}}{\sum_j e^{-|j|/T}},
\label{eq:s-read}
\end{equation}
so that $\mathrm{peek}_{\mathrm{R}}\!\to r_a$ as $T\to0$ while the address carries
a nonzero gradient for $T>0$. The executed (SOFT-STE) path uses the exact one-hot
read~\eqref{eq:s-peek}; only the fully relaxed variant of
Theorem~\ref{thm:s-limit} uses~\eqref{eq:s-read}.

In the executed soft step, dispatch uses the saturated (hard) form of
\eqref{eq:s-select} and the branch returns
$\mathrm{STE}(\mathrm{pc}_{\mathrm{soft}},\mathrm{pc}_{\mathrm{hard}})$. The
relaxed forms \eqref{eq:s-select} and \eqref{eq:s-branch} \emph{without} the
straight-through correction define the ``fully relaxed'' variant analysed in
Theorem~\ref{thm:s-limit}. The temperature-limit result needs one
non-degeneracy assumption, identical to the one in the main paper.

\begin{assumption}[Decision margins]
\label{ass:s-dm}
Along the executed trajectory of length $N$, every discrete decision has a
strictly positive margin. For opcode dispatch at step $t$ with logits
$\ell^{(t)}$ and winner $k^\star_t$, the logit gap
$\Delta_t=\ell^{(t)}_{k^\star_t}-\max_{k\ne k^\star_t}\ell^{(t)}_k$ is positive,
and for each conditional branch with flag logit $z_t$, the margin $m_t=|z_t|$ is
positive. Let $\Delta_{\min}=\min_t\Delta_t>0$ and $m_{\min}=\min_t m_t>0$.
\end{assumption}

These positive margins are a property of the \emph{executed} length-$N$
trajectory; geometrically they define a tube around it inside which the relaxed
maps agree with the hard one. The relaxed read~\eqref{eq:s-read} needs no margin:
its off-target weights decay unconditionally with $T$ (proved below), so only
dispatch and branches require Assumption~\ref{ass:s-dm}.

The three execution modes referenced throughout are summarised in
Table~\ref{tab:s-modes}.

\begin{table}[tbh]
\centering
\caption{The three execution modes. \textsc{soft-ste} is the mode used for every
gradient in this paper: its forward pass is bit-identical to \textsc{hard}
(Theorem~\ref{thm:s-exact}) and its backward pass is a surrogate. The fully
relaxed mode (\textsc{full}) is used only for the temperature-limit analysis
(Theorem~\ref{thm:s-limit}); its forward pass is bit-exact only inside the corner
of small $T$ and large $\alpha$.}
\label{tab:s-modes}
\small
\setlength{\tabcolsep}{4pt}
\begin{tabular}{@{}llll@{}}
\toprule
Mode & Forward & Gradient & Used for\\
\midrule
\textsc{hard} & bit-exact & none & conformance\\
\textsc{soft-ste} & $=$\,\textsc{hard} & surrogate & attribution\\
\textsc{full} & exact in corner & relaxed & $T{\to}0$ study\\
\bottomrule
\end{tabular}
\end{table}

\paragraph{Numerical scope.} Soft mode keeps every quantity in float32 but only
ever at integer values in $[-2^{24},2^{24}]$ (Theorem~\ref{thm:s-exact}), so no
representation error accrues and the equalities below are exact, not approximate.
Both ports apply the \emph{same} round-to-integer after each soft primitive, so
\textsc{jutari} (Zygote) and \textsc{jaxtari} (JAX/XLA) produce identical integer
trajectories. The float path carries gradients only; a quantity outside the
$24$-bit range---which the VCS never produces---would forfeit the exactness
guarantee.

\section{Proofs}

\begin{theorem}[Exact forward equivalence]
\label{thm:s-exact}
For every reachable state $x$ and every finite sharpness $\alpha$ and
temperature $T$, the soft and hard one-step maps are equal by design,
$\Phi_{\mathrm{S}}(x) = \Phi_{\mathrm{H}}(x)$, with identical float32
representations. Consequently, over a trajectory of $N$ steps,
$x^{\mathrm{S}}_t = x^{\mathrm{H}}_t$ for all $t\le N$. The modes differ only in
the Jacobian $\partial\Phi_{\mathrm{S}}/\partial x$, which is defined and
generically nonzero where $\partial\Phi_{\mathrm{H}}/\partial x$ is zero or
undefined.
\end{theorem}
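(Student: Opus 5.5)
The plan is a structural induction over how a handler is built, followed by an induction over time. First I establish forward equality for each of the five primitives of the setup, as it is actually executed in \textsc{soft-ste} mode, on integer-valued float32 inputs.
\begin{itemize}
\item The one-hot read~\eqref{eq:s-peek} evaluates $\sum_i[\![i=a]\!]\,r_i$. Every term except one is $0\cdot r_i=0$, and summing exact zeros onto $r_a$ is exact in float32, so the result is $r_a$ bit for bit.
\item Dispatch in the executed path is the saturated switch on the decoded opcode, so it calls the same handler as the integer table.
\item The branch returns $\mathrm{STE}(\mathrm{pc}_{\mathrm{soft}},\mathrm{pc}_{\mathrm{hard}})$, i.e.\ $\text{soft}+(\text{hard}-\text{soft})$.
\item Round and clamp are applied identically in both modes.
\end{itemize}
The relaxed read~\eqref{eq:s-read} does not occur in this path, so $T$ never enters the forward computation, and $\alpha$ enters only through the STE branch. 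This gives the ``for every finite $\alpha, T$'' part of the statement.

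The main obstacle is making the STE identity $\text{soft}+(\text{hard}-\text{soft})=\text{hard}$ exact in floating point. It fails in general, because $\mathrm{pc}_{\mathrm{soft}}=\mathrm{pc}_{\bar b}+g\delta$ is not an integer. I would close this with the round-to-integer applied after each soft primitive, as stated in the numerical-scope paragraph. Suppose the computed sum $\widehat{h}$ differs from the integer $\mathrm{hard}$ by at most a few ulps of numbers below $2^{24}$. Then the subsequent rounding returns $\mathrm{hard}$ exactly, since $\mathrm{pc}\le 2^{16}$ and $|\mathrm{soft}|$ lies within $256$ of it, so the rounding error is far below $1/2$. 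If the implementation computes $\sg(\text{hard}-\text{soft})$ and adds it without rounding, I fall back to Sterbenz-type arguments: both operands lie in the same binade range with magnitudes below $2^{17}$, which bounds the error by $2^{-6}$ before rounding. Either way, the claim reduces to ``rounded result equals hard integer.''

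Next comes closure under composition. Every handler is a finite composition of these primitives with integer ALU operations, which are carried out on float32 values that are integers in $[-2^{24},2^{24}]$. Such values represent integer arithmetic exactly, because all intermediate sums and products of bytes and addresses stay within the 24-bit mantissa. Each primitive maps equal integer inputs to equal integer outputs, so structural induction over the handler's expression tree gives $\Phi_{\mathrm S}(x)=\Phi_{\mathrm H}(x)$ componentwise for every reachable $x$. This covers registers, RAM, RIOT, the TIA state and the frame buffer, the last two being built from the same primitives.

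The trajectory claim then follows by induction on $t$. We have $x^{\mathrm S}_0=x^{\mathrm H}_0$. If $x^{\mathrm S}_t=x^{\mathrm H}_t$, then $x^{\mathrm S}_{t+1}=\Phi_{\mathrm S}(x^{\mathrm S}_t)=\Phi_{\mathrm H}(x^{\mathrm H}_t)=x^{\mathrm H}_{t+1}$. Reachability is preserved along the way, so the one-step result applies at every $t\le N$.

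For the Jacobian statement, the hard map is piecewise constant in its discrete choices, so its derivative is zero or undefined there. The soft map's backward rules are different, and I would exhibit them explicitly:
\begin{itemize}
\item $\partial\,\mathrm{peek}/\partial r=\mathbf{1}_a$;
\item through the STE branch, $\partial\mathrm{pc}/\partial f=2s\alpha\,\delta\,g(1-g)$, which is nonzero for finite $\alpha$ and $\delta\neq0$;
\item round and clamp use the identity derivative inside the valid range.
\end{itemize}
Since $\sg$ changes only the backward rule, these derivatives are generically nonzero while the values coincide.
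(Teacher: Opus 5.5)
Your proposal is correct and follows the paper's proof. It has the same three parts: forward equality of each primitive under the shared hard opcode switch (one-hot read, identical integer/round arithmetic, the STE branch collapsing to $\mathrm{pc}_{\mathrm{hard}}$), induction on the step index, and the explicit surrogate derivatives $\mathbf{1}_a$ and $\alpha g(1-g)\delta$ for the Jacobian claim.

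Your extra care over the float32 evaluation of $\mathrm{soft}+(\mathrm{hard}-\mathrm{soft})$ is a genuine refinement. The paper covers this only with the blanket $2^{24}$ integer-range remark, even though $\mathrm{pc}_{\mathrm{soft}}$ is not an integer. Note, though, that whenever the Sterbenz lemma applies, the subtraction is exact and the sum equals $\mathrm{hard}$ outright. Your fallback therefore does not leave a $2^{-6}$ residual that needs rounding away.
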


\begin{proof}
We first prove the one-step equality $\Phi_{\mathrm{S}}(x)=\Phi_{\mathrm{H}}(x)$
for an arbitrary reachable $x$, and then extend it to a trajectory of length $N$
by induction on the step index.

\textit{One-step equality.} Fix a reachable $x$. In both modes the handler that
executes is selected by the decoded opcode byte through the same hard switch (the
softmax form~\eqref{eq:s-select} is not evaluated on the forward path), so both
modes run the same handler on the same input. It therefore suffices to show that
each primitive the handler uses returns the same forward value in both modes. A
memory read at the integer address $a$ uses an exact one-hot indicator, so
\begin{equation}
\mathrm{peek}_{\mathrm{S}}(r,a)=\mathbf{1}_a^{\top}r
=\textstyle\sum_i[\![i=a]\!]\,r_i=r_a=\mathrm{peek}_{\mathrm{H}}(r,a),
\label{eq:s-peek-eq}
\end{equation}
the value a hard array read returns. Register, status-flag,
binary-coded-decimal, and timer updates use the same integer and $\mathrm{round}$
arithmetic in both modes (soft mode merely keeps the results in float32). A
conditional branch returns the straight-through value~\eqref{eq:s-ste}. Since
$\sg$ is the identity in the forward direction,
\begin{equation}
\begin{aligned}
\mathrm{STE}(\mathrm{pc}_{\mathrm{soft}},\mathrm{pc}_{\mathrm{hard}})
&=\mathrm{pc}_{\mathrm{soft}}+\sg(\mathrm{pc}_{\mathrm{hard}}-\mathrm{pc}_{\mathrm{soft}})\\
&=\mathrm{pc}_{\mathrm{hard}}
\end{aligned}
\label{eq:s-branch-eq}
\end{equation}
for every $\alpha$ and $T$. Every component of the handler output thus equals its
hard counterpart, and as the two handlers act on the same input,
\begin{equation}
\Phi_{\mathrm{S}}(x)=\Phi_{\mathrm{H}}(x).
\label{eq:s-onestep}
\end{equation}
These equalities hold in float32, not merely over the reals: IEEE-754 single
precision has a $24$-bit significand and so represents every integer in
$[-2^{24},2^{24}]$ exactly~\citep{ieee754_2019,goldberg1991}, and every VCS
quantity lies in this range (data bytes $\le255$, $13$-bit addresses, per-frame
cycle counts of a few tens of thousands), so
\eqref{eq:s-peek-eq}--\eqref{eq:s-onestep} are exact.

\textit{Induction on the trajectory.} Let the two runs share the initial state
and evolve by $x^{\bullet}_{t+1}=\Phi_{\bullet}(x^{\bullet}_t)$. We show
$x^{\mathrm{S}}_t=x^{\mathrm{H}}_t$ for all $0\le t\le N$.

\textit{Base case} ($t=0$): $x^{\mathrm{S}}_0=x^{\mathrm{H}}_0$ by the shared
initial state.

\textit{Inductive step:} assume $x^{\mathrm{S}}_t=x^{\mathrm{H}}_t=:x$ for some
$t<N$. Applying the one-step equality~\eqref{eq:s-onestep} to $x$,
\begin{equation}
x^{\mathrm{S}}_{t+1}=\Phi_{\mathrm{S}}(x^{\mathrm{S}}_t)=\Phi_{\mathrm{S}}(x)
=\Phi_{\mathrm{H}}(x)=\Phi_{\mathrm{H}}(x^{\mathrm{H}}_t)=x^{\mathrm{H}}_{t+1},
\end{equation}
where the outer equalities use the inductive hypothesis and the middle one
is~\eqref{eq:s-onestep}. By induction $x^{\mathrm{S}}_t=x^{\mathrm{H}}_t$ for all
$t\le N$.

\textit{Gradients.} The stop-gradient in~\eqref{eq:s-ste} alters only the
reverse-mode rule: the returned counter carries the derivative of
$\mathrm{pc}_{\mathrm{soft}}=\mathrm{pc}_{\bar b}+g\,\delta$, namely
$\alpha\,g(1-g)\,\delta$, which is generically nonzero where the hard branch---a
step function of the flag---has zero or undefined derivative. Likewise the
read~\eqref{eq:s-peek} has Jacobian $\mathbf{1}_a$ with respect to $r$. The
forward pass is unchanged while gradients become available.
\end{proof}

\begin{theorem}[Temperature-limit bound]
\label{thm:s-limit}
Consider the fully relaxed emulator that uses the softmax
dispatch~\eqref{eq:s-select}, sigmoid branch~\eqref{eq:s-branch}, and
distance-softmax reads~\eqref{eq:s-read} \emph{without} the straight-through
correction. Under Assumption~\ref{ass:s-dm}, its one-step map converges to the
hard map as $T\to0$ and $\alpha\to\infty$, with per-step deviation
\begin{equation}
\begin{aligned}
\big\|\Phi^{T}_{\mathrm{S}}(x)-\Phi_{\mathrm{H}}(x)\big\|_\infty
\;\le\; C\big[&(K-1)\,e^{-\Delta_{\min}/T}\\
&{}+e^{-\alpha m_{\min}}+\rho\,e^{-1/T}\big],
\end{aligned}
\label{eq:s-bound}
\end{equation}
where $C$ bounds the value range (bytes $\le255$, branch offsets $\le256$) and
$\rho$ the number of reads in the step. Over $N$ steps, with Lipschitz constant
$L\ge1$ for error propagation, the trajectory deviation is at most
$\frac{L^{N}-1}{L-1}$ times the right-hand side of~\eqref{eq:s-bound}, which is
$\mathcal{O}(e^{-c/T})$ with $c=\min(\Delta_{\min},1)$.
\end{theorem}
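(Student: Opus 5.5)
The plan is to prove the one-step bound~\eqref{eq:s-bound} by decomposing the fully relaxed step into its three kinds of relaxed primitives, bounding the forward deviation each one introduces relative to its hard counterpart, and summing. After that, the one-step bound is propagated along the trajectory with a Lipschitz/Gr\"onwall-type recursion.

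\textbf{Dispatch.} The hard dispatch picks row $V_{k^\star}$. Write
$w^\top V - V_{k^\star} = \sum_{k\ne k^\star} w_k (V_k - V_{k^\star})$. Each off-winner weight satisfies
\[
w_k = \frac{e^{(\ell_k-\ell_{k^\star})/T}}{\sum_j e^{(\ell_j-\ell_{k^\star})/T}} \le e^{-\Delta_{\min}/T},
\]
since the denominator is at least $1$ (the $j=k^\star$ term). The handler outputs lie in the byte or counter range, so $|V_k - V_{k^\star}|$ is at most $C$. This yields the term $C(K-1)e^{-\Delta_{\min}/T}$.

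\textbf{Branch.} We have $\mathrm{pc}_{\mathrm{soft}} - \mathrm{pc}_{\mathrm{hard}}$ equal to $(g - \mathbb{1}[z>0])\,\delta$. Two facts bound the gate error by $e^{-\alpha|z|} \le e^{-\alpha m_{\min}}$:
\[
1-\sigma(u) = \sigma(-u) \le e^{-u} \quad\text{for } u>0,
\]
and the symmetric bound for $u<0$. Since $|\delta|\le 256 \le C$, this gives the second term.

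\textbf{Reads.} First I will fix the convention that the read at the executed integer address $a$ is the hard value $r_a$. Then
\[
\mathrm{peek}_{\mathrm R} - r_a = \sum_{k\ne0} w_k (r_{a+k} - r_a).
\]
The normaliser $\sum_j e^{-|j|/T}$ is at least $1$. So the total off-target mass is at most $\sum_{k\ne0} e^{-|k|/T} = 2e^{-1/T}/(1-e^{-1/T})$. For $T\le 1/\ln 3$ (or absorbing a factor $3$ into $C$) this is at most $c'\,e^{-1/T}$. No margin is needed here, because the integer spacing of addresses supplies the unit gap. With at most $\rho$ reads per step and values bounded by $255\le C$, this gives $C\rho e^{-1/T}$.

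\textbf{Combining the terms.} The three sources of error enter additively only if the other arithmetic in a handler does not amplify them. Here I will use that within a single step each relaxed primitive's output feeds the state essentially directly, and I will account for any composition inside a step through the constant $C$. This is the step I expect to be the main obstacle. A rigorous version needs either a per-handler Lipschitz constant, which must be folded into $C$, or the observation that the relaxed variant evaluates each primitive on hard inputs at the current state. I will take the latter reading, since the bound is stated at a fixed state $x$ where the logits, flags and addresses are those of the hard machine. Under that reading the triangle inequality on the $\ell_\infty$ norm gives~\eqref{eq:s-bound}.

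\textbf{Trajectory.} Let $\varepsilon$ denote the right-hand side of~\eqref{eq:s-bound}, and let $e_t = \|x^T_t - x^H_t\|_\infty$. Then
\[
e_{t+1} \le \|\Phi^T_{\mathrm S}(x^T_t) - \Phi^T_{\mathrm S}(x^H_t)\| + \|\Phi^T_{\mathrm S}(x^H_t) - \Phi_{\mathrm H}(x^H_t)\| \le L e_t + \varepsilon,
\]
where $L$ is the assumed Lipschitz constant of the relaxed map. The margins must also remain valid along the executed trajectory; this is what Assumption~\ref{ass:s-dm} provides as a tube. Unrolling from $e_0=0$ gives
\[
e_N \le \varepsilon\sum_{i<N} L^i = \varepsilon\,\frac{L^N-1}{L-1}.
\]
Finally, take $\alpha$ coupled to $T$ (for instance $\alpha \ge 1/T$) or treat the sigmoid term separately. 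Then every exponential in $\varepsilon$ is at most $e^{-\min(\Delta_{\min},1)/T}$ up to constants, which gives the stated $\mathcal{O}(e^{-c/T})$.
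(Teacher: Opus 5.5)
Your proposal is correct and follows the paper's proof essentially step for step. It uses the same three per-primitive bounds: the loser mass controlled by the $k^\star$ term of the softmax denominator, $\sigma(-\alpha m)\le e^{-\alpha m_{\min}}$ for the gate, and the geometric neighbour sum for the distance-softmax read. These are then summed and propagated by unrolling $e_{t+1}\le L e_t+\varepsilon$.

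The only variation is in the propagation split. The paper takes $L$ as a Lipschitz constant of $\Phi_{\mathrm{H}}$. You take it for the relaxed map and evaluate the per-step bound on the hard trajectory, which fits the scope of Assumption~\ref{ass:s-dm} more directly, though a global $L$ for the relaxed map grows with $\alpha$ and $1/T$ outside the margin tube. Your caveats about intra-step composition and about tying $\alpha$ to $T$ for the $\mathcal{O}(e^{-c/T})$ claim address points the paper's proof passes over silently.
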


\begin{proof}
We bound the deviation of the relaxed one-step map from the hard one and then
propagate it along the trajectory. Throughout, $C$ bounds the range of any state
value (bytes $\le255$, branch offsets $\le256$).

\textit{Dispatch term.} At a step with logits $\ell$ and winner $k^\star$ the
gap is $\Delta\ge\Delta_{\min}>0$ by Assumption~\ref{ass:s-dm}, so for any loser
$k\ne k^\star$,
\begin{equation}
w_k=\frac{e^{\ell_k/T}}{\sum_j e^{\ell_j/T}}
\le\frac{e^{\ell_k/T}}{e^{\ell_{k^\star}/T}}
=e^{(\ell_k-\ell_{k^\star})/T}\le e^{-\Delta_{\min}/T},
\end{equation}
so the non-winning mass is $1-w_{k^\star}=\sum_{k\ne k^\star}w_k\le
(K-1)e^{-\Delta_{\min}/T}$. As the dispatch output is the convex combination
$\sum_k w_k V_k$,
\begin{equation}
\begin{aligned}
\Big\|\textstyle\sum_k w_k V_k-V_{k^\star}\Big\|_\infty
&\le(1-w_{k^\star})\max_k\|V_k-V_{k^\star}\|_\infty\\
&\le(K-1)\,e^{-\Delta_{\min}/T}\,C .
\end{aligned}
\label{eq:s-disp}
\end{equation}

\textit{Branch term.} For a margin $m=|z|\ge m_{\min}$ the gate error is
\begin{equation}
\big|\sigma(\alpha z)-[\![z>0]\!]\big|=\sigma(-\alpha m)\le e^{-\alpha m_{\min}},
\label{eq:s-gate}
\end{equation}
so the blended counter $\mathrm{pc}_{\bar b}+g\,\delta$ differs from the hard
target $\mathrm{pc}_{\bar b}+[\![z>0]\!]\,\delta$ by
\begin{equation}
\big|(g-[\![z>0]\!])\,\delta\big|\le|\delta|\,e^{-\alpha m_{\min}}
\le C\,e^{-\alpha m_{\min}}.
\label{eq:s-branchterm}
\end{equation}

\textit{Read term.} A distance-softmax read~\eqref{eq:s-read} deviates from the
hard value $r_a$ by the neighbour pull. Using $|r_{a+k}-r_a|\le 2C$ and
$1-w_0=\big(\sum_{k\ne0}e^{-|k|/T}\big)/\big(\sum_j e^{-|j|/T}\big)
\le 2\sum_{k\ge1}e^{-k/T}=2e^{-1/T}/(1-e^{-1/T})$,
\begin{equation}
\Big|\textstyle\sum_{k\ne0}w_k(r_{a+k}-r_a)\Big|
\le (1-w_0)\,2C\le C\,e^{-1/T}
\label{eq:s-readterm}
\end{equation}
for $T$ below a fixed $T_0$, absorbing the geometric factor and the byte range
into $C$. Unlike dispatch and branch this requires no margin: the integer address
spacing is $\ge1$, so the off-target weights decay as $e^{-1/T}$ unconditionally.
The $\rho$ reads of the step contribute at most $\rho\,C\,e^{-1/T}$.

\textit{Per-step bound.} Adding~\eqref{eq:s-disp}, \eqref{eq:s-branchterm},
and~\eqref{eq:s-readterm} bounds the one-step deviation: for every reachable $x$,
\begin{equation}
\begin{aligned}
\delta_{\mathrm{step}}
&:=\big\|\Phi^{T}_{\mathrm{S}}(x)-\Phi_{\mathrm{H}}(x)\big\|_\infty\\
&\le C\big[(K-1)e^{-\Delta_{\min}/T}+e^{-\alpha m_{\min}}+\rho\,e^{-1/T}\big],
\end{aligned}
\label{eq:s-perstep}
\end{equation}
which is~\eqref{eq:s-bound}.

\textit{Propagation.} Let $L\ge1$ be a Lipschitz constant of $\Phi_{\mathrm{H}}$
in $\|\cdot\|_\infty$ and $e_t=\|x^{T,\mathrm{S}}_t-x^{\mathrm{H}}_t\|_\infty$ the
trajectory error after $t$ steps, with $e_0=0$. Each step adds at most
$\delta_{\mathrm{step}}$ to the hard map applied to the accumulated error,
\begin{equation}
e_{t+1}\le L\,e_t+\delta_{\mathrm{step}},
\end{equation}
and unrolling from $e_0=0$ gives
\begin{equation}
e_N\le\delta_{\mathrm{step}}\sum_{i=0}^{N-1}L^i
=\delta_{\mathrm{step}}\,\frac{L^N-1}{L-1}.
\end{equation}
As $T\to0$ and $\alpha\to\infty$,
$\delta_{\mathrm{step}}=\mathcal{O}(e^{-c/T})\to0$ with
$c=\min(\Delta_{\min},1)$, so the relaxed trajectory converges to the hard one,
$\Phi^{T}_{\mathrm{S}}\to\Phi_{\mathrm{H}}$.
\end{proof}

\section{Effect of the Relaxation Parameters on the Gradient}

Theorems~\ref{thm:s-exact} and~\ref{thm:s-limit} say the hard limit
($\alpha\to\infty$ for the sigmoid branch, $T\to0$ for the softmax select) is
exact in value but degenerate in gradient. We illustrate both knobs on \textsc{jutari}
with a single target sprite (an invader) whose horizontal placement is produced
by the soft primitives. For the sigmoid branch we plot the screen-space
directional-derivative saliency
$\partial(\text{screen})/\partial(\text{control})$ as the sharpness $\alpha$ is
swept (Figure~\ref{fig:alphatemp}). The main paper additionally plots the gate
and its gradient directly, and shows the softmax select's temperature effect in
pixel space (a sampled sprite-occupancy heatmap). In the gradient maps the colour
encodes the
\emph{sign} of the derivative---red where a pixel brightens as the control
increases (the sprite arriving) and blue where it darkens (the sprite
leaving)---and its intensity the magnitude, on a black background. The colour bar
is in absolute units and the per-panel number is the peak relative to the row
maximum.

For the branch the position is $\mathrm{pc}=(1-g)\,\mathrm{pc}_L+g\,\mathrm{pc}_R$
with $g=\sigma(\alpha z)$, so its derivative with respect to the control $z$ is
$\alpha\,\sigma(\alpha z)\,(1-\sigma(\alpha z))\,(\mathrm{pc}_R-\mathrm{pc}_L)$:
a dipole, blue on the placement being vacated and red on the one being entered.
The magnitude $\alpha\,\sigma(\alpha z)(1-\sigma(\alpha z))$ is
\emph{non-monotonic} in $\alpha$ at the fixed operating point $z{=}0.25$. A small
$\alpha$ gives a shallow gate and hence a small slope, and a large $\alpha$
saturates the gate ($\sigma(\alpha z)\to1$, so $\sigma(1-\sigma)\to0$). The maximum
is the intermediate $\alpha$ for which $\alpha z$ falls in the sigmoid's steep band,
near $\alpha{=}6$ here---$\alpha{=}2,6,20$ give peak magnitudes
$0.53,1.00,0.15$, and at $\alpha{=}20$ the gate has essentially switched and the
gradient has nearly vanished (the hard-branch limit). The main paper plots this
gate and its gradient directly.

In every panel the forward render is the exact hard one
(Theorem~\ref{thm:s-exact}), and only the gradient changes. It is most informative
at intermediate relaxation and decays to zero as the relaxation is sharpened toward
the hard limit (Theorem~\ref{thm:s-limit}). This
is a qualitative study of how the parameter \emph{values} act on the gradient,
run on \textsc{jutari}, and the values need not be tuned for the bit-exact forward pass.

\begin{figure}[tbh]
\centering
\includegraphics[width=\columnwidth]{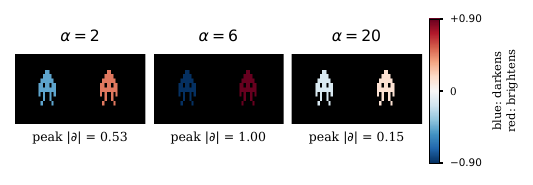}
\caption{Soft branch: the screen-space gradient
$\partial(\text{screen})/\partial z$ as the sharpness $\alpha$ is swept, on
\textsc{jutari}. The target sprite's placement is a sigmoid-gated blend of a left and a
right position, so the gradient is a dipole---blue where a pixel darkens (the
placement being vacated) and red where it brightens (the one being entered). The
colour bar is in absolute units (here $\pm0.90$) and the per-panel number is the
peak relative to the maximum. The dipole is strongest at an intermediate $\alpha$
(here $\alpha{=}6$) and nearly vanishes by $\alpha{=}20$ as the gate
saturates---the hard-branch limit (cf.\ Theorems~\ref{thm:s-exact}
and~\ref{thm:s-limit}).}
\label{fig:alphatemp}
\end{figure}

\section{Forward Bit-Exactness Under Full Relaxation}

The gradient study above keeps the straight-through estimator, so the forward
pass is bit-exact at any $\alpha$ and $T$ (Theorem~\ref{thm:s-exact}). To probe
the temperature-limit bound (Theorem~\ref{thm:s-limit}) empirically we instead
\emph{drop} the straight-through correction---the fully relaxed forward, where
$\alpha$ and $T$ act on the executed values themselves---and run it on the full
\textsc{jutari} simulator executing the real Space Invaders ROM. Every instruction casts
the sigmoid-blended program counter and the temperature-blended operand reads
back to integers, so the trajectory stays bit-exact only while every such cast
rounds to the value the executed (straight-through) path would produce: a large
$\alpha$ keeps the rounded soft program counter on the hard branch target, and a
small $T$ keeps the distance-softmax read on the addressed byte.

\begin{figure}[tb]
\centering
\includegraphics[width=\columnwidth]{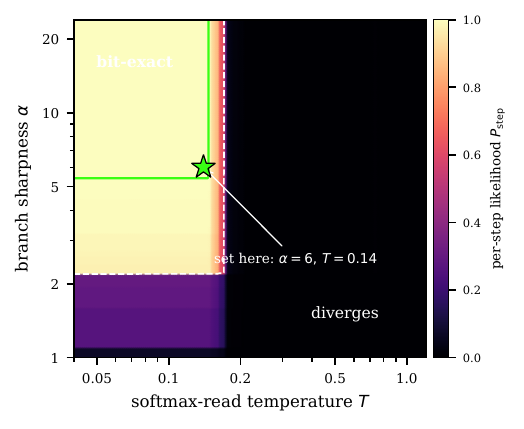}
\caption{Overview of the per-step bit-exactness likelihood
$P_{\mathrm{step}}=p_{\mathrm{read}}(T)^{\rho}\,p_{\mathrm{branch}}(\alpha)^{f_b}$
of the fully relaxed forward across the $(\alpha,T)$ plane (Space Invaders).
A bit-exact corner---bright, with the green contour marking
$P_{\mathrm{step}}{=}1$---sits at large branch sharpness $\alpha$ and small read
temperature $T$, bounded by a gradual branch boundary (in $\alpha$) and a sharp
temperature boundary (in $T$); the white dashed line is $P_{\mathrm{step}}{=}0.5$.
Because $P_{\mathrm{step}}$ factorises, the plane is the outer combination of the
two $1$D profiles, and the two boundary sweeps of Table~\ref{tab:relax-exact} are
slices along the green contour's two edges. The star marks the recommended
operating point ($\alpha{=}6$, $T{=}0.14$)---the corner where they cross.}
\label{fig:relaxheat}
\end{figure}

Because each cast is a rounding, the likelihood of staying bit-exact follows
from the cast margins. A crisp-flag branch with signed offset $\delta$ keeps its
rounded soft program counter on the hard target exactly when
$|\delta|<\tau_b(\alpha)$, $\tau_b(\alpha)=\tfrac12(1+e^{\alpha})$, so the
per-branch exactness $p_{\mathrm{branch}}(\alpha)$ is the fraction of executed
branch offsets below $\tau_b$. A temperature-$T$ read at address $a$ returns
$\mathrm{round}\!\big(\sum_k w_k\,\mathrm{mem}[a{+}k]\big)$ with
$w_k\propto e^{-|k|/T}$, and is exact when the neighbour pull
$\big|\sum_{k\neq0} w_k(\mathrm{mem}[a{+}k]-\mathrm{mem}[a])\big|<\tfrac12$,
giving a per-read exactness $p_{\mathrm{read}}(T)$. Treating the $\rho$ reads and
the occasional branch of an instruction as independent casts, the per-step
bit-exactness likelihood is
$P_{\mathrm{step}}(\alpha,T)=p_{\mathrm{read}}(T)^{\rho}\,
p_{\mathrm{branch}}(\alpha)^{f_b}$, with $\rho=2.16$ reads per instruction and
branch fraction $f_b=0.374$ measured on the executed Space Invaders trace, and
the run is bit-exact only while every cast rounds correctly.
$P_{\mathrm{step}}$ is a \emph{heuristic} likelihood---it treats the casts as
independent Bernoulli rounds, which they are not---so it predicts \emph{where}
exactness degrades rather than giving a probabilistic guarantee; the guarantee is
the deterministic worst-case cast condition, with $P_{\mathrm{step}}{=}1$ exactly
when every cast margin holds.

\begin{table}[tb]
\centering
\caption{Forward bit-exactness of the \emph{fully relaxed} pass (soft branch
without the straight-through estimator, temperature-$T$ reads) on the full
simulator running Space Invaders, sampled around the two critical boundaries.
\textbf{Measured} $N_{\mathrm{div}}$: the first instruction (of $N{=}3000$ from
reset) at which the relaxed trajectory deviates from the executed
(straight-through) one; ``exact'' means no deviation over the whole run.
\textbf{Predicted} (cast-margin model): per-branch exactness
$p_{\mathrm{branch}}(\alpha)$, per-read exactness $p_{\mathrm{read}}(T)$, and the
per-step likelihood
$P_{\mathrm{step}}=p_{\mathrm{read}}^{\rho}\,p_{\mathrm{branch}}^{f_b}$
($\rho=2.16$, $f_b=0.374$); the forward is bit-exact iff $P_{\mathrm{step}}=1$.
\textsc{jaxtari} uses the identical cast-to-Int logic and is omitted.}
\label{tab:relax-exact}
\small
\setlength{\tabcolsep}{4pt}
\begin{tabular}{rr c | ccc}
\toprule
\multicolumn{2}{c}{} & Meas. & \multicolumn{3}{c}{Predicted}\\
\cmidrule(lr){3-3}\cmidrule(lr){4-6}
$\alpha$ & $T$ & $N_{\mathrm{div}}$ & $p_{\mathrm{branch}}$ & $p_{\mathrm{read}}$ & $P_{\mathrm{step}}$\\
\midrule
\multicolumn{6}{l}{\emph{Branch boundary} ($T{=}0.14$, read-exact)}\\
$2$ & $0.14$ & $7$     & $0.039$ & $1.000$ & $0.298$\\
$3$ & $0.14$ & $879$   & $0.953$ & $1.000$ & $0.982$\\
$4$ & $0.14$ & $1121$  & $0.988$ & $1.000$ & $0.996$\\
$5$ & $0.14$ & $1216$  & $0.996$ & $1.000$ & $0.999$\\
$6$ & $0.14$ & \emph{exact} & $1.000$ & $1.000$ & $1.000$\\
\midrule
\multicolumn{6}{l}{\emph{Temperature boundary} ($\alpha{=}6$, branch-exact)}\\
$6$ & $0.08$ & \emph{exact} & $1.000$ & $1.000$ & $1.000$\\
$6$ & $0.10$ & \emph{exact} & $1.000$ & $1.000$ & $1.000$\\
$6$ & $0.12$ & \emph{exact} & $1.000$ & $1.000$ & $1.000$\\
$6$ & $0.14$ & \emph{exact} & $1.000$ & $1.000$ & $1.000$\\
$6$ & $0.15$ & $803$ & $1.000$ & $0.981$ & $0.959$\\
$6$ & $0.18$ & $2$   & $1.000$ & $0.223$ & $0.039$\\
$6$ & $0.20$ & $2$   & $1.000$ & $0.144$ & $0.015$\\
\bottomrule
\end{tabular}
\end{table}

Figure~\ref{fig:relaxheat} maps this likelihood across the whole $(\alpha,T)$
plane: a bit-exact corner at large $\alpha$ and small $T$, a gradual branch
boundary, and a sharp temperature boundary. Table~\ref{tab:relax-exact} zooms
into the two boundaries. As the faithful measurement
we report the first instruction at which the relaxed trajectory deviates from the
executed one over $N=3000$ steps (``exact'' means no deviation): the game
re-initialises much of its state every frame, so a diverged run can partially
re-synchronise and an end-of-run state match would overstate exactness.
Measurement and model agree---the forward is bit-exact precisely where
$P_{\mathrm{step}}=1$, namely $\alpha\ge6$ (so $\tau_b$ exceeds the largest
offset, $122$) and $T\le0.14$, and the first-divergence step grows as
$P_{\mathrm{step}}\to1$ ($\alpha{=}3,4,5$ at $T{=}0.14$ first diverge at steps
$879,1121,1216$). The branch boundary is gradual in $\alpha$ while the
temperature boundary is sharp: a single max-amplitude neighbour already pulls a
read by $\approx1.7>\tfrac12$, so $p_{\mathrm{read}}$ drops from $1$ at
$T\le0.14$ to $0.14$ at $T=0.20$. This is the empirical face of
Theorem~\ref{thm:s-limit}: exactness is recovered as $\alpha\to\infty$ and
$T\to0$. The \textsc{jaxtari} twin uses the identical cast-to-Int logic, so the same
analysis applies.

\begin{figure*}[p]
\centering
\includegraphics[width=\textwidth]{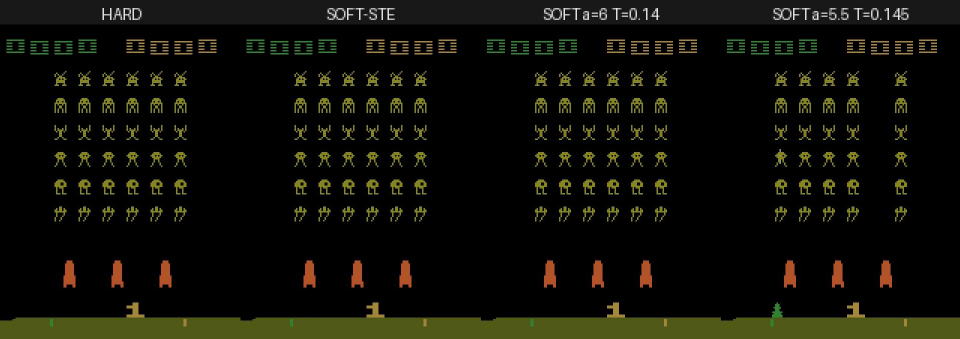}
\caption{One frame of the $60$-second full-simulator comparison (Space Invaders,
beam-timed). \textbf{HARD} and \textbf{SOFT-STE} are pixel-identical
(Theorem~\ref{thm:s-exact}). \textbf{SOFT} $\alpha{=}6,T{=}0.14$ is inside the
bit-exact corner and stays identical to HARD for the whole rollout. \textbf{SOFT}
$\alpha{=}5.5,T{=}0.145$ is just past the read boundary and diverges \emph{mildly}:
the game stays recognisable but a column of invaders is missing---the localised
effect of one corrupted sprite-position read, persistent over the rollout.}
\label{fig:divframe}
\end{figure*}

\begin{figure*}[p]
\centering
\includegraphics[width=\textwidth]{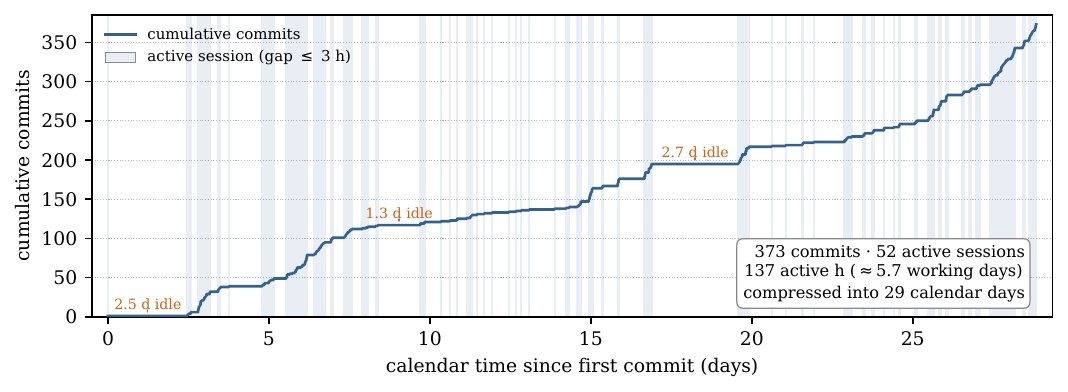}
\caption{Implementation effort reconstructed from the version-control log.
The step curve is the cumulative number of commits against calendar time, and
each shaded band is an active work session (a maximal run of consecutive
commits no more than three hours apart), and the long flat stretches between
bands are idle gaps, the largest of which are annotated. The $373$ commits form
$52$ active sessions totalling ${\sim}137$ hours of work---about $5.7$
round-the-clock days---spread across a $29$-day calendar window.}
\label{fig:timeline}
\end{figure*}

\begin{table*}[p]
\centering
\caption{Per-game conformance over the 64 ALE games supported by our \textsc{xitari} configuration. \textbf{Map}: mapper auto-detected by the ports (the set exercises seven of the ten implemented formats: 2K, 4K, F8, F6, E0, FE, F8SC; F4, F6SC and F4SC are implemented but unused here). \textbf{SHA-256}: first 12 hex digits of the ROM image (full hashes ship in the repository manifest; ROMs are not redistributed). \textbf{R}/\textbf{P}: RAM- and pixel-exact against \textsc{xitari} on every evaluated frame (30 NOOP frames for RAM, 60 frames of the fixed action stream for pixels, after the standard 60-NOOP+4-reset boot). All 64 games are exact in both (\checkmark); \textsc{jaxtari} matches \textsc{jutari} game-for-game.}
\label{tab:games}
\small
\setlength{\tabcolsep}{4pt}
\begin{tabular}{@{}lllcc@{}}
\toprule
Game & Map & SHA-256 & R & P\\
\midrule
air\_raid & 4K & \texttt{6b5a22ec074a} & \checkmark & \checkmark\\
alien & 4K & \texttt{9cd556c7d7f5} & \checkmark & \checkmark\\
amidar & 4K & \texttt{da5b760d3ada} & \checkmark & \checkmark\\
assault & 4K & \texttt{d32ca4dd9e84} & \checkmark & \checkmark\\
asterix & F8 & \texttt{b6161db0530d} & \checkmark & \checkmark\\
asteroids & F8 & \texttt{d1a6e808412b} & \checkmark & \checkmark\\
atlantis & 4K & \texttt{cef4b7b91ea2} & \checkmark & \checkmark\\
bank\_heist & 4K & \texttt{c32daffdb926} & \checkmark & \checkmark\\
battle\_zone & F8 & \texttt{efd6f639e724} & \checkmark & \checkmark\\
beam\_rider & F8 & \texttt{83b30cb52649} & \checkmark & \checkmark\\
berzerk & 4K & \texttt{cdff0422329d} & \checkmark & \checkmark\\
bowling & 2K & \texttt{dff44a85289f} & \checkmark & \checkmark\\
boxing & 2K & \texttt{54760d7e0710} & \checkmark & \checkmark\\
breakout & 2K & \texttt{376323f051c3} & \checkmark & \checkmark\\
carnival & 4K & \texttt{80eab759b6bd} & \checkmark & \checkmark\\
centipede & F8 & \texttt{6def669f54bd} & \checkmark & \checkmark\\
chopper\_command & 4K & \texttt{055637282252} & \checkmark & \checkmark\\
crazy\_climber & F8 & \texttt{486ad4cc8956} & \checkmark & \checkmark\\
defender & 4K & \texttt{e3c7ed7a073f} & \checkmark & \checkmark\\
demon\_attack & 4K & \texttt{8d72feeb267d} & \checkmark & \checkmark\\
double\_dunk & F6 & \texttt{c0ea53a91a39} & \checkmark & \checkmark\\
elevator\_action & F8SC & \texttt{f3b3ad03fd8c} & \checkmark & \checkmark\\
enduro & 4K & \texttt{ca9bb89755a6} & \checkmark & \checkmark\\
fishing\_derby & 2K & \texttt{84753bc6ecb4} & \checkmark & \checkmark\\
freeway & 2K & \texttt{3ef620234b98} & \checkmark & \checkmark\\
frostbite & 4K & \texttt{cbfdad89480d} & \checkmark & \checkmark\\
gopher & 4K & \texttt{b4aff03aeb0f} & \checkmark & \checkmark\\
gravitar & F8 & \texttt{17a19cd7b5ca} & \checkmark & \checkmark\\
hero & F8 & \texttt{75bd02ec7545} & \checkmark & \checkmark\\
ice\_hockey & 4K & \texttt{06df1fc568d9} & \checkmark & \checkmark\\
jamesbond & E0 & \texttt{995253f0d3a2} & \checkmark & \checkmark\\
journey\_escape & 4K & \texttt{963e2d3da52b} & \checkmark & \checkmark\\
\bottomrule
\end{tabular}\hfill\begin{tabular}{@{}lllcc@{}}
\toprule
Game & Map & SHA-256 & R & P\\
\midrule
kangaroo & F8 & \texttt{a2826eaea3a8} & \checkmark & \checkmark\\
krull & F8 & \texttt{b40f10d14217} & \checkmark & \checkmark\\
kung\_fu\_master & F8 & \texttt{3f6501a649ad} & \checkmark & \checkmark\\
montezuma\_revenge & E0 & \texttt{69d363a74754} & \checkmark & \checkmark\\
ms\_pacman & F8 & \texttt{dde0b43c5dee} & \checkmark & \checkmark\\
name\_this\_game & 4K & \texttt{cc219a7246e8} & \checkmark & \checkmark\\
pacman & 4K & \texttt{58e781b472e0} & \checkmark & \checkmark\\
phoenix & F8 & \texttt{b01085bc5227} & \checkmark & \checkmark\\
pitfall & 4K & \texttt{c719b47714d8} & \checkmark & \checkmark\\
pong & 2K & \texttt{41623e3c2614} & \checkmark & \checkmark\\
pooyan & 4K & \texttt{ac9abbfc272d} & \checkmark & \checkmark\\
private\_eye & F8 & \texttt{c8f9ce1b2e80} & \checkmark & \checkmark\\
qbert & 4K & \texttt{3257221832a7} & \checkmark & \checkmark\\
riverraid & 4K & \texttt{4c6842b8af64} & \checkmark & \checkmark\\
road\_runner & F6 & \texttt{4b6c2e68d693} & \checkmark & \checkmark\\
robotank & FE & \texttt{c186409481e6} & \checkmark & \checkmark\\
seaquest & 4K & \texttt{fbc29f4678f6} & \checkmark & \checkmark\\
skiing & 2K & \texttt{804eff5f0196} & \checkmark & \checkmark\\
solaris & F6 & \texttt{0afa36e5f4d8} & \checkmark & \checkmark\\
space\_invaders & 4K & \texttt{7224b17462b9} & \checkmark & \checkmark\\
star\_gunner & 4K & \texttt{3ad501a39280} & \checkmark & \checkmark\\
surround & 2K & \texttt{c6864f5c9f43} & \checkmark & \checkmark\\
tennis & 2K & \texttt{5a2052f020bf} & \checkmark & \checkmark\\
time\_pilot & F8 & \texttt{8ea97e335a2f} & \checkmark & \checkmark\\
tutankham & E0 & \texttt{403397fe8583} & \checkmark & \checkmark\\
up\_n\_down & F8 & \texttt{74890a43e591} & \checkmark & \checkmark\\
venture & 4K & \texttt{1870b0dd6fb1} & \checkmark & \checkmark\\
video\_pinball & 4K & \texttt{cef82c39bbb0} & \checkmark & \checkmark\\
videochess & 4K & \texttt{d44f2a115393} & \checkmark & \checkmark\\
wizard\_of\_wor & 4K & \texttt{d69ee89e65c3} & \checkmark & \checkmark\\
yars\_revenge & 4K & \texttt{ff777c8d4ea0} & \checkmark & \checkmark\\
zaxxon & F8 & \texttt{e1c323ce00cc} & \checkmark & \checkmark\\
\bottomrule
\end{tabular}
\end{table*}

\paragraph{Choosing $\alpha$ and $T$.} The two bounds are
\emph{program-independent}, which is what makes the setting reliable across
games. A $6502$ branch displacement is at most $|\delta|\le127$, so
$\tfrac12(1+e^{\alpha})>127$---that is $\alpha\ge\ln(2\cdot127-1)\approx5.5$---makes
every branch round correctly, and $\alpha\ge6$ suffices. A read is pulled away
from its addressed byte by at most $255\,(1-w_0)\approx510\,e^{-1/T}$, which stays
below $\tfrac12$ once $T\le 1/\ln(4\cdot255)\approx0.144$; the corner value
$T{=}0.14$ sits just inside this (worst-case pull $\approx0.40<\tfrac12$).
Both bounds depend only on the instruction set and the $8$-bit byte range, not on
the program, so $\alpha{=}6,\ T{=}0.14$ is bit-exact across games---verified over
$6000$ instructions on Pong, Breakout and Space Invaders---whereas \emph{just
outside} them the forward fails on a game-dependent subset ($\alpha{=}5$ diverges
on Space Invaders but not Pong or Breakout; $T{=}0.15$ diverges on Pong and Space
Invaders but not Breakout), which is exactly why the conservative bounds are
needed. Within the exact region the relaxed gradient is strongest at the boundary
(it vanishes as $\alpha\to\infty$, $T\to0$; Theorem~\ref{thm:s-limit}), so the
recommended operating point is the corner of the bit-exact region,
$\alpha{=}6,\ T{=}0.14$ (Figure~\ref{fig:relaxheat}): the smallest $\alpha$ and
largest $T$ that keep the forward bit-exact, giving the most informative gradient
compatible with exactness. We present this as a conservative, program-independent
operating point---its per-cast bounds are worst-case over the instruction set and
the $8$-bit byte range, and the cross-game exactness is confirmed empirically
(Pong, Breakout, Space Invaders)---not as a theorem that every program is exact at
this setting.

\paragraph{Full-simulator demonstration.} The per-cast model is borne out on a
real $60$-second rollout. We render Space Invaders beam-timed in four modes side
by side (Figure~\ref{fig:divframe}): the exact hard run, the executed soft
(straight-through) run, and the fully relaxed run at two settings. The relaxed
frames come from the same cycle-accurate hard renderer driven through a
default-off relaxation hook on the CPU's branch and reads, so HARD, SOFT-STE and
any in-corner setting render pixel-identically. At $\alpha{=}6,\ T{=}0.14$---inside
the bit-exact corner---the relaxed run stays identical to HARD for the entire
rollout. At $\alpha{=}5.5,\ T{=}0.145$---just past the read boundary---it diverges,
but only mildly: the game remains recognisable with a persistent, localised error
(a depleted column of invaders), because the relaxation corrupts a sprite-position
read rather than the control flow.

\paragraph{Delayed divergence.} A setting can also boot cleanly and diverge only
after many frames---the Bernoulli first failure deep into the rollout. The
mechanism is that the read margins are \emph{time-varying} for RAM: a data read
is exact while its neighbouring bytes are similar and crosses $\tfrac12$ only once
the game state evolves into a high-contrast layout, so the most-borderline cast
may be one the program reaches only late. Sweeping $T$ just below the read
boundary on a long clean-boot rollout (boot exact, relaxation enabled only for the
rollout) makes the first-divergence frame recede: at $\alpha{=}20$ the rollout
first deviates on frame~$1$ for $T{=}0.1448$, on frame~$194$ for $T{=}0.1446$, and
only on frame~$2365$ (${\approx}40$\,s of play) for $T\le0.1444$---reproducing the
game bit-for-bit for tens of seconds before a single late RAM read first rounds
wrong. (Below $T{=}0.1465$, the lowest fixed ROM-read threshold, no opcode/operand
fetch ever fails, so this late divergence is driven purely by an evolving RAM
read.) The exact--diverge boundary is thus a per-cast threshold \emph{spectrum}
over both fixed (ROM) and state-dependent (RAM) reads, not a single cliff.

\begin{table}[tb]
\centering
\caption{Throughput across engine $\times$ backend (real Pong ROM, soft mode,
$3{,}000$-step rollout, median of repeated runs after JIT warm-up). \textsc{jutari}
(Julia) executes one environment with inlined opcode handlers; \textsc{jaxtari} (JAX)
is slow per single environment but recovers---and on the GPU far exceeds---that by
\texttt{vmap}-batching the compiled rollout, its intended regime. Single-env rows are
steps/s ($=$ env-steps/s at batch~1); batched rows are aggregate env-steps/s at their
best batch ($N{\approx}4096$ on the GPU).}
\label{tab:s-compare}
\small
\setlength{\tabcolsep}{6pt}
\begin{tabular}{llr}
\toprule
engine \& backend & mode & env-steps/s\\
\midrule
\textsc{jutari}, CPU  & single env              & $370{,}100$\\
\textsc{jaxtari}, CPU & single env              & $1{,}178$\\
\textsc{jaxtari}, CPU & batched (\texttt{vmap})  & ${\sim}60{,}000$\\
\textsc{jaxtari}, GPU & batched (\texttt{vmap})  & $\mathbf{3{,}119{,}115}$\\
\bottomrule
\end{tabular}
\end{table}

\section{Per-Game Conformance}

Table~\ref{tab:games} lists every game in the conformance set with its
auto-detected mapper, a short ROM hash, and its per-frame RAM and pixel
exactness, so the headline 64/64 claim can be audited game by game. Both ports
pass every game; \textsc{jaxtari} and \textsc{jutari} agree game-for-game, and
the cross-check between them is part of the test suite.

\section{Throughput}

We measure soft-mode throughput as steps per second (one step $=$ one CPU
instruction) on a fixed ROM after just-in-time warm-up, on a single Apple~M1~Max
core (JAX CPU backend, Julia~1.12), reporting the median of repeated runs.
Table~\ref{tab:s-compare} summarises the three operating points the rest of this
section develops; the per-batch scaling behind the batched rows is in
Tables~\ref{tab:s-batched} (CPU) and~\ref{tab:s-gpu} (GPU).

\begin{table}[tb]
\centering
\caption{Batched soft-mode throughput of \textsc{jaxtari} on the CPU backend
(M1~Max, Pong, a compiled \texttt{lax.scan} \texttt{vmap}-ped over $N$ environments).
\emph{Aggregate} is total environment-steps per second; \emph{per-env} is
aggregate$/N$. Batching trades single-environment latency for aggregate throughput.}
\label{tab:s-batched}
\small
\setlength{\tabcolsep}{5pt}
\begin{tabular}{rrrr}
\toprule
batch $N$ & aggregate & per-env & vs.\ $N{=}1$\\
 & (env-steps/s) & (steps/s) & \\
\midrule
$1$   & $30{,}200$ & $30{,}200$ & $1.0\times$\\
$16$  & $35{,}200$ & $2{,}200$  & $1.2\times$\\
$64$  & $48{,}300$ & $750$      & $1.6\times$\\
$128$ & $55{,}900$ & $440$      & $1.9\times$\\
\bottomrule
\end{tabular}
\end{table}

\begin{table}[tb]
\centering
\caption{Batched soft-mode \textsc{jaxtari} throughput on a single \emph{commodity}
GTX~1080~Ti (11\,GB Pascal), Pong, a $3{,}000$-step \texttt{lax.scan}
\texttt{vmap}-ped over $N$ environments (\textbf{mean $\pm$ standard deviation over
$10$ timed runs}, JIT warm-up excluded). Aggregate environment-steps per second,
forward and forward+gradient. Both peak near $N{=}4096$ at ${\sim}3$M env-steps/s---%
${\sim}50\times$ the CPU \texttt{vmap} asymptote (Table~\ref{tab:s-batched})---and the
gradient costs only ${\sim}5\%$ more. The run-to-run spread is ${<}0.2\%$ throughout,
so the throughput measurement is highly stable.}
\label{tab:s-gpu}
\small
\setlength{\tabcolsep}{6pt}
\begin{tabular}{rrr}
\toprule
batch $N$ & forward & forward+grad\\
 & (env-steps/s) & (env-steps/s)\\
\midrule
$1$     & $2{,}458 \pm 1$         & $1{,}265 \pm 4$\\
$64$    & $258{,}747 \pm 17$      & $224{,}194 \pm 48$\\
$256$   & $677{,}299 \pm 345$     & $595{,}776 \pm 177$\\
$1024$  & $2{,}024{,}825 \pm 216$ & $1{,}819{,}602 \pm 365$\\
$4096$  & $\mathbf{2{,}949{,}204 \pm 2{,}562}$ & $\mathbf{2{,}799{,}834 \pm 162}$\\
$16384$ & $2{,}848{,}876 \pm 490$ & $2{,}575{,}793 \pm 444$\\
$65536$ & $2{,}599{,}829 \pm 100$ & $2{,}534{,}157 \pm 43$\\
\bottomrule
\end{tabular}
\end{table}

At single-instruction granularity \textsc{jutari} sustains ${\sim}370{,}100$
steps/s and \textsc{jaxtari} ${\sim}1{,}178$, a ${\sim}314\times$ gap. The
difference is dominated by JAX's per-operation kernel-launch overhead: each soft
step dispatches one opcode through a $256$-way switch, so at single-instruction
granularity the launch overhead dwarfs the instruction, whereas Julia inlines the
small handlers---a property of the execution granularity, not the AD systems. For a
gradient-based XAI workflow (one forward-plus-backward pass per attribution)
\textsc{jaxtari}'s ${\sim}42$\,s per $50{,}000$-step trace is comfortably
interactive.

Two modes recover throughput beyond this single-step figure: compiling a whole
rollout into one \texttt{lax.scan} (${\sim}30{,}000$ steps/s for a single
environment, a ${\sim}25\times$ gain that amortises the launches), and
\texttt{vmap}-ing that scan over a batch of independent environments.
Table~\ref{tab:s-batched} reports the batched scaling on the CPU
backend (Apple~M1~Max, real Pong ROM, median of repeated $3{,}000$-step runs after
JIT warm-up; the batched run is verified bit-identical to the unbatched one).
Aggregate throughput rises \emph{sub-linearly} with the batch because the
data-dependent $256$-way opcode dispatch (\texttt{lax.switch}) evaluates all
candidate handler branches under \texttt{vmap}; on a CPU these branches serialise,
so the aggregate asymptotes near ${\sim}60{,}000$ env-steps/s (Table~\ref{tab:s-batched}).

\begin{figure}[tb]
\centering
\includegraphics[width=\linewidth]{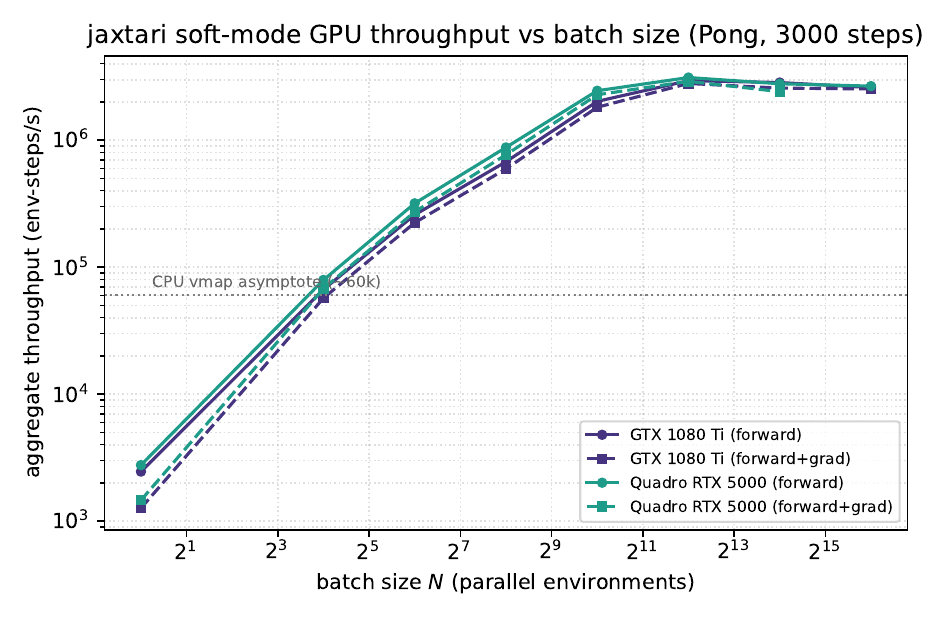}
\caption{\textsc{jaxtari} soft-mode GPU throughput vs.\ batch size $N$ (Pong,
$3{,}000$ steps; two commodity GPUs, GTX~1080~Ti and Quadro~RTX~5000). Forward and forward+gradient both rise
${\sim}1200\times$ from $N{=}1$ to a peak near $N{=}4096$ (${\sim}3$M env-steps/s,
${\sim}50\times$ the CPU \texttt{vmap} asymptote, dotted), then roll off as the device
saturates---the $256$-way \texttt{lax.switch} all-branch dispatch parallelises across
lanes on the GPU where it serialises on the CPU.}
\label{fig:gpu-throughput}
\end{figure}

\paragraph{GPU.} On a GPU the same all-branch evaluation maps onto the device's
parallel lanes instead of serialising, so batching the differentiable rollout is the
intended operating regime. Table~\ref{tab:s-gpu} and Figure~\ref{fig:gpu-throughput}
measure this on a single \emph{commodity} GTX~1080~Ti (an 11\,GB Pascal card; real
Pong ROM, a $3{,}000$-step \texttt{lax.scan} \texttt{vmap}-ped over $N$ environments,
mean over $10$ runs after JIT warm-up---with the per-run standard deviation in
Table~\ref{tab:s-gpu}---verified bit-identical to the CPU rollout).
Aggregate forward throughput climbs from ${\sim}2{,}500$ env-steps/s at $N{=}1$ to a
peak of ${\sim}2.95$M at $N{\approx}4096$---roughly $50\times$ the CPU asymptote and
${\sim}1200\times$ the single-environment rate---then rolls off gently as the device
saturates. The workload is compute/occupancy-bound, not memory-bound: the peak uses
only ${\sim}0.1$\,GB, so the ceiling is lane parallelism rather than capacity.
Crucially the reverse-mode gradient is nearly free at scale---\emph{forward+gradient}
tracks forward to within ${\sim}5\%$ at the peak (${\sim}2.8$M env-steps/s)---so a
commodity GPU turns batched \emph{differentiable} rollouts (gradient-based RL or
attribution over thousands of environments at once) into the natural way to use
\textsc{jaxtari}. A second, newer architecture (Quadro~RTX~5000, 16\,GB Turing)
traces the same curve and peaks slightly higher ($3{,}117{,}908 \pm 2{,}871$
env-steps/s, mean over $10$ runs, at $N{\approx}4096$; Figure~\ref{fig:gpu-throughput}),
so the scaling reflects the
batched all-branch execution rather than one specific GPU. We report widely-available
cards deliberately: the curve's shape, not a datacenter accelerator, is the point.

\section{Implementation Effort}

Figure~\ref{fig:timeline} is the evidence behind the effort estimate in the main
paper. All effort numbers are derived from the project's \texttt{git} history,
which---like the rest of the artifact---will be made public on acceptance of the
paper, so every figure here can be independently recomputed from the commit log.
Because every step of the project was committed, the commit timestamps
record when work actually happened. To turn them into an active-time figure we
group the commits into sessions, treating a gap of more than three hours between
consecutive commits as a session boundary. Three hours is well above the
in-session rhythm---the median gap between consecutive commits is about
$13$ minutes---and well below the multi-hour and overnight gaps that separate
genuine work periods, so the partition is insensitive to the exact threshold:
the active total is $106$ hours at a two-hour cutoff and $162$ hours at a
four-hour cutoff, bracketing the $137$ hours obtained at three hours.

With the three-hour boundary the $373$ commits fall into $52$ sessions whose
durations sum to $136.8$ hours, i.e.\ about $5.7$ round-the-clock days (the
coding agents run continuously, not in eight-hour shifts), even though they are
spread over a $29$-day calendar window. The remaining
${\sim}80\%$ of the calendar span is idle and excluded. It consists mostly of
stretches during which the lead programmer was travelling and without reliable
internet access, which appear in the plot as the long flat segments. This active
total---not the calendar span---is what we compare against the scale of the
reference codebase in the main paper.

\section{Fixed Settings and Parameters}

Table~\ref{tab:s-params} consolidates the fixed settings and relaxation
parameters used across the paper's experiments. The executed gradient mode is
\textsc{soft-ste}, whose forward pass is bit-identical to \textsc{hard} and is
therefore independent of the temperature $T$; the sharpness $\alpha$ and $T$
enter only the relaxation analysis and the surrogate gradient.

\begin{table}[h]
\centering
\caption{Fixed settings and (hyper-)parameters used in the paper's experiments.}
\label{tab:s-params}
\small
\setlength{\tabcolsep}{5pt}
\begin{tabular}{@{}l p{0.52\columnwidth}@{}}
\toprule
Setting & Value\\
\midrule
Soft sharpness $\alpha$       & $2,\,6,\,20$ (gradient-shape study)\\
Soft temperature $T$          & swept; \textsc{soft-ste} forward independent of $T$\\
Numeric representation        & float32, integer-valued in $[-2^{24},2^{24}]$\\
Boot sequence                 & 60 NOOP + 4 RESET frames\\
Episode-start randomization   & 0--30 random NOOPs (Mnih-style)\\
Conformance window            & 30 frames (RAM), 60 frames (screen)\\
Throughput rollout length     & 3{,}000 CPU instructions\\
Throughput batch sizes $N$    & 1, 16, 64, 256, 1024, 4096, 16384, 65536\\
Throughput repeats            & 10 (GPU), 3 (CPU baseline); JIT warm-up excluded\\
\bottomrule
\end{tabular}
\end{table}

\end{document}